\title{Stochastic Gradient Descent on a Tree: an Adaptive and Robust Approach to Stochastic Convex Optimization}
\author[*]{Sattar Vakili}
\author[$\dagger$]{Sudeep Salgia}
\author[$\dagger$]{Qing Zhao}
\affil[*]{Prowler.io, Cambridge, UK, \emph{sattar@prowler.io}}
\affil[$\dagger$]{School of Electrical \& Computer Engineering, Cornell University, Ithaca, NY, \emph{\{ss3827,qz16\}@cornell.edu} }
\date{}
\newtheorem{lemma}{Lemma}
\newtheorem{theorem}{Theorem}
\newcommand{\Log}{\lambda}
\newcommand{\1}{\mathds{1}}
\def\E{\mathbb{E}}
\def\R{\mathbb{R}}
\def\I{\mathbb{I}}
\def\N{\mathbb{N}}
\def\bp{\check{p}}
\def\nn{\nonumber}
\def\O{\mathcal{O}}
\def\X{\mathcal{X}}
\DeclareMathOperator{\sgn}{sgn}
\begin{document}

\maketitle

\begin{abstract}
Online minimization of an unknown convex function over the interval $[0,1]$
is considered under first-order
stochastic bandit feedback, which returns a random
realization of the gradient of the function at each query
point. Without knowing the distribution of the random
gradients, a learning algorithm sequentially chooses query
points with the objective of minimizing regret defined as
the expected cumulative loss of the function values at the
query points in excess to the minimum value of the
function. An approach based on devising a biased random
walk on an infinite-depth binary tree constructed through
successive partitioning of the domain of the function is
developed. Each move of the random walk is guided by a sequential test based on confidence bounds on the empirical mean constructed using  the law of the iterated logarithm. With no tuning parameters, this learning algorithm is
robust to heavy-tailed noise with infinite variance and adaptive to unknown function
characteristics (specifically, convex, strongly convex, and nonsmooth). It achieves the corresponding optimal
regret orders (up to a $\sqrt{\log T}$ or a $\log\log T$ factor) in each class of functions  and offers better or matching regret orders than the classical stochastic gradient descent approach which requires the knowledge of the function characteristics for tuning the sequence of step-sizes.    

\end{abstract}

\footnotetext{This work was supported by the National Science Foundation under Grant
CCF-1815559.}

\section{Introduction}

\subsection{Stochastic Convex Optimization}
\label{sec:intro-sco}

In stochastic convex optimization, the objective function $f(x)$ is a stochastic function given as the expectation over a random variable/vector $\xi$:
\begin{equation}
    f(x) = \E[F(x,\xi)],
    \label{eq:fx}
\end{equation}
where the design parameter $x$ is in a convex and compact set
$\mathcal{X}$. The distribution of $\xi$ may not be known, or even if it is known, the expectation over $\xi$ is difficult to evaluate analytically. As a result, the objective function $f(x)$ is unknown, except for the knowledge that it is convex. 

The above optimization problem can be cast as a sequential learning problem where the learner chooses a query point $x_t\in\mathcal{X}$ at each time $t$ and observes the corresponding random loss $F(x_t,\xi_t)$ or the random gradient $G(x_t,\xi_t)$. These two feedback models are commonly referred to, respectively, as the zeroth-order and the first-order stochastic optimization. A learning policy governs the selection of the query points $\{x_t\}_{t\ge 1}$ based on past observations, with the objective that $x_T$ converges to the minimizer $x^*=\arg\min_{x\in\mathcal{X}} f(x)$ (or $f(x_T)$ to $f(x^*)$) over a growing horizon of length $T$.

Under an online formulation of the problem, a more suitable performance measure is the cumulative regret defined as the expected cumulative loss  at the query points in excess
to the minimum loss: $R(T) =\E\left[\sum_{t=1}^T  (F(x_t,\xi_t) - f(x^*))\right]$. Under
this objective, the query process needs to balance the exploration of
the input space $\mathcal{X}$ in search for $x^*$ and the associated
loss incurred during the search process. The behavior of regret $R(T)$
over a growing horizon length $T$ is a finer measure than the
convergence of $x_T$ or $f(x_T)$. Specifically, a policy with a
sublinear regret order in $T$ implies that $f(x_T)$ converges to $f(x^*)$. The converse, however,
is not true.
In particular, the convergence of $x_T$ to $x^*$
or $f(x_T)$ to $f(x^*)$ does not imply a
sublinear, let alone an optimal, order of the regret.

An example of online stochastic convex optimization is the classification of a real-time stream of random instances $\{\xi_t\}_{t\ge 1}$ with each instance given by its feature and hidden label. Without knowing the joint distribution of the feature and label, an online learning policy chooses the classifiers $\{x_t\}_{t\ge 1}$ sequentially over time to produce online classification of the streaming instances. Empirical risk minimization using mini-batching of a large data set can also be viewed as a stochastic optimization problem~\cite{Bottou2018}, except that the resulting expectation is with respect to the random drawing of the mini-batches (often uniform with replacement) rather than the true distribution underlying the data generation.  

\subsection{Stochastic Gradient Descent}

The study of stochastic convex optimization dates back to the seminal work by  Robbins and Monro in 1951~\cite{Robbins51} under the term ``stochastic approximation.''  The problem studied there is to approximate the root of a monotone function $g(x)$ based on successive observations of random function values at chosen query points (also known as stochastic root finding~\cite{Pasupathy11}). The equivalence of this problem to the first-order stochastic convex optimization is immediate when $g(x)$ is 
the gradient of a convex loss function $f(x)$. The zeroth-order version of the problem was studied in a follow-up work by  Kiefer and Wolfowitz~\cite{Kiefer52}. 

The stochastic gradient descent (SGD) approach developed by Robbins and Monro~\cite{Robbins51} has long become a classic and is widely used. The basic idea of SGD is to choose the next query point $x_{t+1}$ in the opposite direction of the
observed  gradient while ensuring $x_{t+1}\in\mathcal{X}$ via a projection operation. Omitting the projection operation, we can write $x_{t+1}$ as
\begin{equation}
x_{t+1} = x_t - \eta_t G(x_t,\xi_t),    
\end{equation}
where $\eta_t$ is a properly chosen step-size at time $t$. Due to the noise effect of the random gradients $G(x_t; \xi_t)$, it is necessary that the step-sizes $\{\eta_t\}_{t\ge 1}$ diminishes to zero to ensure convergence of $x_t$. Since $G(x_t; \xi_t)$ contains both the signal (the true gradient $g(x_t)=\E[G(x_t; \xi_t)]$) and noise, the diminishing rate of $\{\eta_t\}_{t\ge 1}$ in $t$ needs to be carefully controlled to balance the tradeoff between learning rate and noise attenuation. Naturally, the optimal choice depends on how fast the gradient $g(x)$ approaches to zero as $x$ tends to $x^*$ and the variance of the random gradient samples.

While earlier studies on stochastic approximation focus on the convergence of $x_T$ and $f(x_T)$ (see a survey by Lai in~\cite{LaiSur}), a series of recent work has established the regret orders of SGD for different classes of functions. As shown in Tabel~\ref{Table:Comp-SGD-SGDT}, SGD offers  ${\O}(\sqrt{T}\log T)$ regret
for convex functions, ${\O}(\log^2(T))$ regret for $\alpha-$strongly convex functions, and ${\O}(\log T)$ regret for functions that are non-differentiable at $x^*$, which are near-optimal\footnote{A number of variants of SGD with various noise-reduction techniques exist in the literature that achieve the optimal regret order (see, for example,~\cite{Rakhlin12}). We consider in Table~\ref{Table:Comp-SGD-SGDT} the basic form of SGD since these noise-reduction techniques often require additional storage and computation resources and may not be suitable for online settings. An additional assumption on the smoothness of the objective function with prior knowledge on the smoothness parameter can also close the gap to the lower bounds~\cite{Shamir13}.} as compared to the lower bounds.

To achieve these near-optimal regret orders, however, it is necessary to know which category the underlying unknown objective function $f(x)$ belongs to, as well as nontrivial bounds on the corresponding parameters of the function characteristics (i.e., the parameter $\alpha$ for strong convexity and the jump in the subgradient at $x^*$ when $f(x)$ is non-differentiable at $x^*$). Such information is crucial in choosing the diminishing rate of the step-sizes $\{\eta_t\}_{t\ge 1}$, and the sensitivity of SGD to model mismatch, estimation errors in the parameters, and ill-conditioning of the functions is well documented.

\subsection{RWT: an Adaptive and Robust Approach}

We show in this work that for one-dimensional problems, an alternative approach to stochastic convex optimization self adapts to the function characteristics and offers better or matching regret orders than SGD in each class of functions without assuming any knowledge on the function characteristics.  It can also handle heavy-tailed noise with infinite variance, a case for which the applicability of SGD is unclear to our knowledge.

Referred to as Random Walk on a Tree (RWT), this policy was proposed by two of the authors of this paper in a prior work~\cite{ISITPaper} that analyzed its regret performance for convex functions under sub-Gaussian noise distributions. In this paper, we demonstrate the adaptivity of RWT to different function characteristics and robustness to heavy-tailed noise with infinite variance. We also refine the termination thresholds in the local sequence test of RWT based on the law of the iterated logarithm, which leads to improved regret orders.

The basic idea of RWT is to construct an infinite-depth binary tree
based on successive partitioning of the input space
$\mathcal{X}$. Specifically, the root of the tree corresponds to
$\mathcal{X}$, which, without loss of generality, is assumed to be
$[0,1]$ for the one-dimensional case. The tree grows to infinite depth
based on a binary splitting of each node (i.e., the corresponding
interval) that forms the two children of the node at the next level. 

The query process of RWT is based on a biased random walk on this interval tree
that initiates at the root node. Each move of the random walk is guided
by a local sequential test based on random gradient realizations drawn
from the left boundary, the middle point, and the right boundary of the
interval corresponding to the current location of the random walk. The
goal of the local sequential test is to determine, with a confidence
level greater than $1/2$, whether there is a change of sign in the
gradient in the left sub-interval or the right sub-interval of the
current node. If one is true (with the chosen confidence level),
the walk moves to the corresponding child that sees the sign change. For all other outcomes, the walk moves back to the parent of the current node.
The stopping rule and the output of the local sequential test are based
on properly constructed lower and upper confidence bounds of the
empirical mean (or truncated empirical mean in the case of infinite variance) of the observed gradient realizations. A greater than
$1/2$ bias of the random walk is sufficient to ensure convergence to the
optimal point $x^*$  at a geometric rate, regardless of the function characteristics.

By bounding the sample complexity of the local sequential test and
analyzing the trajectory of the biased random walk, we establish the regret orders of RWT as shown in Table~\ref{Table:Comp-SGD-SGDT} for sub-Gaussian distributions (a $\log\log T$ factor is omitted; see Sec.~\ref{Sec:Analysis} for the exact orders and finite-time bounds). Similar order-optimal (up to poly-$\log T$ orders) regret performance is also established for heavy-tailed distributions with infinite variance.  We are unaware of results on whether SGD can achieve sublinear regret orders under infinite noise variance.

\begin{table}[b]
\begin{center}
\begin{tabular}{ |c|c|c| c| } 
 \hline
&&&\\
       & convex & strongly convex & non-differentiable\\
       &&& at $x^*$ \\
 \hline
 &&&\\
 SGD & $\sqrt T \log T$ & $\log^2 T$ & $\log T$ \\
 &\cite{Shamir13}&\cite{Shamir13}&\cite{Lim2011}\\
 \hline
  &&&\\
 RWT & $\sqrt{T\log T}$ & $\log T$ & $\log T$ \\ 
 &&&\\
 \hline
&&&\\
 Lower Bound & $\sqrt T$ & $\log T$ & $\log T$ \\ 
 &\cite{Agrawal12}&\cite{Agrawal12}&\cite{Agrawal12}\\
 \hline
\end{tabular}
\end{center}
\caption{\label{Table:Comp-SGD-SGDT}Regret performance of SGD and RWT under sub-Gaussian noise.}
\end{table}

%

In contrast to SGD that relies on a manually controlled sequence of step-sizes to tradeoff learning rate with noise attenuation, RWT, with no tuning parameters, self adapts to function characteristics through the local sequential test that automatically draws more or fewer samples as demanded by the underlying statistical models. As shown in 
Table~\ref{Table:Comp-SGD-SGDT}, RWT outperforms or matches the regret orders of SGD without prior information on the function characteristics.

Another key difference between SGD and RWT is in the induced random walk in the input space $\mathcal{X}$. The unstructured moves of SGD  may land at any points in  $\mathcal{X}$. RWT, however, queries only a fixed set of countable number of points in $\mathcal{X}$. Furthermore, given the current location on the binary tree, the next move is restricted to only the parent and the two children of this node. This highly structured mobility allows storage-efficient caching of side observations for noise reduction at future query points.

\subsection{Other Related Work}

The classical probabilistic bisection algorithm (PBA) has been employed as a solution to stochastic root finding under a one-dimensional input space.
 Assuming a prior distribution of the optimal point $x^*$, PBA
updates the belief (i.e., the posterior distribution) of $x^*$ based on
each observation and subsequently probes the median point of the belief.
It was shown in \cite{Frazier16} that the regret order of PBA is upper bounded
by $\O(T^{0.5+\epsilon})$ for a small $\epsilon>0$, and an $\O({\sqrt T}{\log T})$ regret order was conjectured. 

There may appear to be a connection between RWT and PBA, since both
algorithms involve a certain bisection of the input domain. These two
approaches are, however, fundamentally different. First, PBA requires
the knowledge on the distribution of the random gradient function to perform the belief update, while
RWT operates under unknown models. Second, the belief-based bisection
in PBA is on the entire input domain $\mathcal{X}$ at each query and needs to be updated  based on each random observation. The interval tree in
RWT is predetermined,
and each move of the random walk leads to a bisection of a
\emph{sub-interval} of $\mathcal{X}$ that is shrinking in geometric rate
over time with high probability. It is this zooming effect of the biased
random walk that leads to a $\O(1)$ computation and memory complexity.
For PBA, if $\mathcal{X}$ is discretized to $M$ points for computation
and storage, updating and sorting the belief would incur $\O(M\log M)$
computation complexity at each query and linear memory requirement. Lastly, the regret order of RWT outperforms that of PBA.


Under the zeroth-order feedback model where the decision maker has
access to the function values, the problem can be viewed as a
continuum-armed bandit problem, on which a vast body of results exists.
In particular, the work in~\cite{Agrawal11} developed an approach based on the ellipsoid algorithm that achieves an $\O(\sqrt T (\log T)^{\frac{3}{2}})$ regret when the objective function $f$ is convex and Lipschitz. The continuum armed bandit under Lipschitz assumption (not necessarily convex) has been studied in~\cite{Agrawal95, Kleinberg05, Kleinberg08} where higher orders of regret were shown. The $\mathcal{X}$-armed bandit introduced in~\cite{Bubeck11} considered a Lipschitz function with respect to a dissimilarity function known to the learner. Under the assumption of a finite number of global optima and a particular smoothness property, an $\Tilde{\O}(\sqrt{T})$ regret was shown. While the proposed policy in~\cite{Bubeck11} uses a tree structure for updating the indexes in a bandit algorithm, it is fundamentally different from RWT in that the policy does not induce a random walk on the tree.
This line of work differs from the gradient-based approach considered in this work. Nevertheless, since an $\O(1)$ number of samples from $F$ can be translated to a sample from $G$ under certain regularity assumptions, gradient-based approaches can be extended to cases where samples from $F$ are directly fed into the learning policy. 

We mention that the stochastic online learning setting considered here is different, in problem formulation, objective, and techniques, from an adversarial counterpart of the problem where the loss function is deterministic and adversarially chosen at each time $t$. On this line of research, see~\cite{Hazan16,Shwartz12} and references therein.

\section{Problem Formulation}\label{Sec2}

We aim to minimize a stochastic convex loss function $f(x)$ as given in~\eqref{eq:fx}. Let $g(x)$ be the gradient (or sub-gradient) of $f(x)$. Let $G(x,\xi)$ be unbiased random gradient observations with $\E[G(x,\xi)]=g(x)$.

Without knowing $f(x)$ or the stochastic models of $F(x,\xi)$ or $G(x,\xi)$, a learner sequentially chooses the query points $\{x_t\}_{t=1}^\infty$, incurs i.i.d. losses $F(x_t,\xi_t)$, and  observes i.i.d. gradient samples $G(x_t,\xi_t)$. The objective is to  design a learning policy $\pi$ that is a mapping from past observations to the next query point to minimize the cumulative regret defined as 
\begin{eqnarray}\label{RegretDef}
{R}_{\pi}(T)=\mathbb{E}\left[\sum_{t=1}^T \left(F(x_{\pi(t)},\xi_t)- F(x^*,\xi_t)\right)\right],
\end{eqnarray}
where $x_{\pi(t)}$ is the query point at time $t$ under policy $\pi$.

\subsection{Function Characteristics}

The loss function $f$ is said to be convex if and only if
\begin{eqnarray}\label{convex}
f(y) \ge f(x) + g(x)(y-x),~ \forall x,y \in \X.
\end{eqnarray}
It is $\alpha$-strongly convex (for some $\alpha>0$) if and only if
\begin{eqnarray}\label{SC}
f(y) \ge f(x) + g(x)(y-x) + \frac{\alpha}{2} (y-x)^2,~ \forall x,y \in \X.
\end{eqnarray}

We also consider a nonsmooth case where $f(x)$ is non-differentiable at $x^*$. This often occurs in optimization problems that involve L1-norm regularization or have discrete parameters~\cite{Lim2011}. For such functions, there exists a lower bound $\delta>0$ on the magnitude of the (sub)-gradient: 
\begin{eqnarray}\label{nondiff}
 |g(x)| \ge \delta~~~\text{for all}~x\neq x^*.
\end{eqnarray}
In other words, the signal component in the random observations $G(x,\xi)$ does not diminish to zero as $x$ tends to $x^*$, making $\log T$ regret order possible even under noise with infinite variance. 

\subsection{Noise Characteristics}


The distribution of $G(x,\xi)-g(x)$ is said to be sub-Gaussian with parameter $\sigma^2$ if its moment generating function is bounded by that of a Gaussian random variable with variance $\sigma^2$:
\begin{eqnarray}\label{SubGAss}
\mathbb{E}\left[\exp(\lambda \left(G(x,\xi)-g(x))\right)\right] \le \exp(\frac{\lambda^2\sigma^2}{2}).
\end{eqnarray}

We also consider heavy-tailed distributions where the only assumption is the existence of a $b$-th ($b>1$) moment: 
\begin{eqnarray}\label{HTAss}
\E\left[|G(x,\xi)|^b\right]\le u,
\end{eqnarray} 
for some $u>0$. Note that this covers the class of distributions of $G(x,\xi)$ with unbounded variance.




\section{Random Walk on a Tree}\label{Sec3}

RWT is based on an infinite-depth binary tree with nodes representing a subinterval of $\mathcal{X}$. The $2^l$ nodes at depth $l$ ($l=0,1,2,\ldots$) of the tree correspond to the intervals resulting from an equal-length partition of $\mathcal{X}$, with each interval of length $2^{-l}$. Each node at depth $l$ has two children corresponding to its equal-length subintervals at depth $l+1$. Let $N_{k,l}$ ($k=1,\ldots, 2^l$, $l=0,1,\ldots$) denote the $k$th node at depth $l$. 
We use the terms node and its corresponding interval interchangeably.     

\vspace{1em}
\begin{figure}[h]
\centering
\includegraphics[width=.55\textwidth]{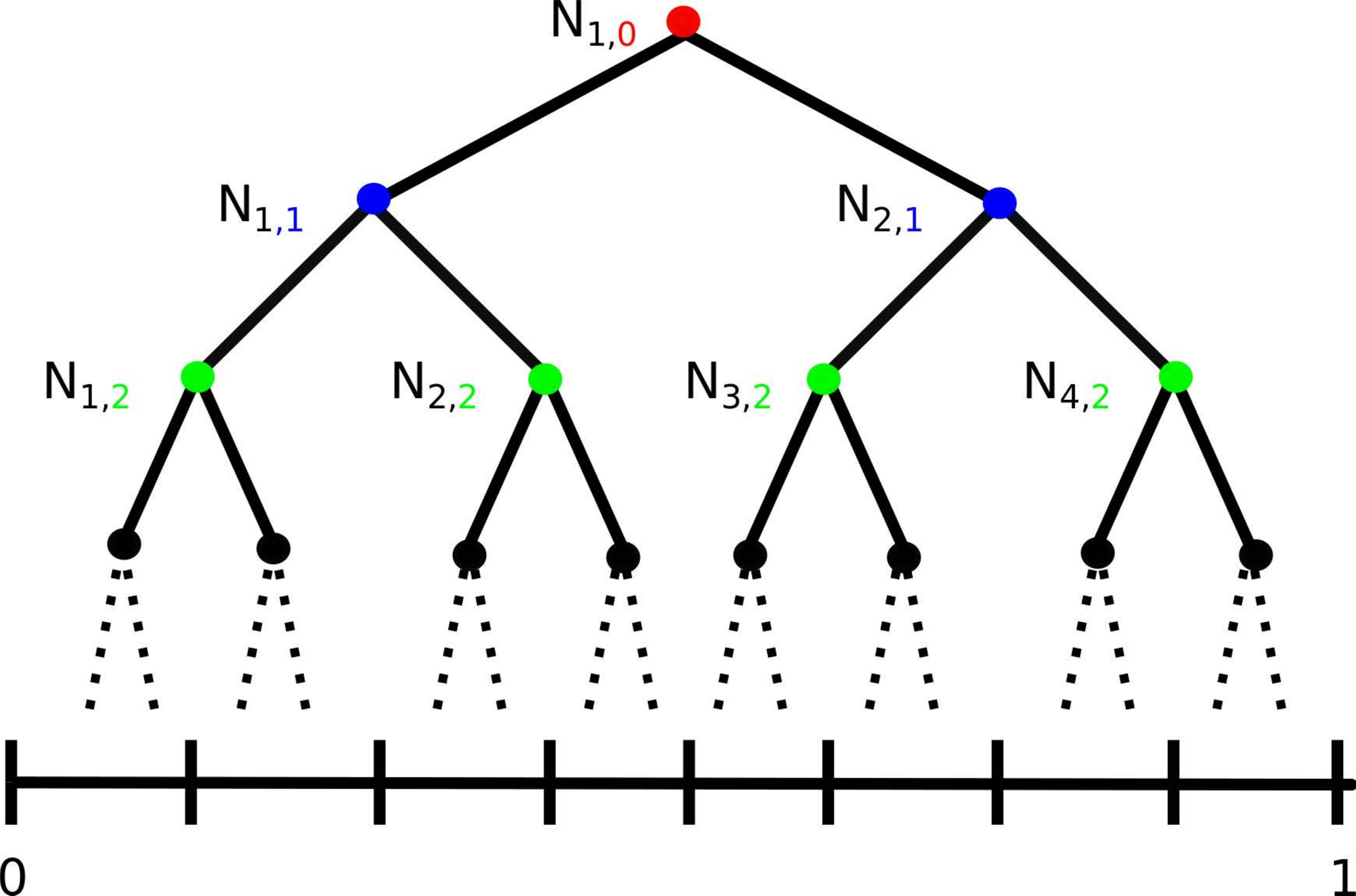}
\caption{The binary tree $\mathcal{T}$ representing the subintervals of $[0,1]$. At level $0$, $N_{1,0}$ corresponds to the interval $[0,1]$; at level $1$, $N_{1,1}$ and $N_{2,1}$, respectively, correspond to the intervals $[0,0.5]$ and $[0.5,1]$; and so on. }
\label{fig:tree}
\end{figure}

\subsection{The Biased Random Walk on the Tree}

The basic structure of RWT is to carry out a biased random walk on the interval tree. The walk starts at the root of the tree. Each move of the random walk is to one of the three adjacent nodes (i.e., the parent and the two children with the parent of the root defined as itself) of the current location. It is guided by the outputs of a confidence-bound based sequential test carried over the two boundary points and the middle point of the interval currently being visited by the random walk. 

Consider a generic sampling point $x \in[0,1]$. The goal of the sequential test is to determine, at a given confidence level, whether $g(x)$ is negative or positive. If the former is true, the test module outputs $-1$, indicating the target $x^*$ is more likely to lie on the right of the current sampling point $x$; if the latter is true, the test module outputs $1$, indicating the target $x^*$ is more likely to lie on the left of $x$ (see the next subsection on the details of the sequential test).

Based on the binary outcomes of the local sequential tests, the random walk on the tree  consists of the following loop until the end of the time horizon. 
Let $N_{k,l}$ denote the
current location of the random walk. The boundary points and the middle point of the interval corresponding to $N_{k,l}$ are probed by the sequential test module. If the output sequence on the left boundary, middle point and the
            right boundary, in order, is $\{-1,1,1\}$ (indicating a sign change in the left subinterval), the walk 
                 moves to the left child of $N_{k,l}$. If the output sequence is $\{-1,-1,1\}$, the walk
                moves to the right child of $N_{k,l}$. For all other output sequences, the walk moves back to the parent of $N_{k,l}$.

\subsection{The Local Sequential Test}

We now specify the local sequential test at a generic query point $x$. The test sequentially draws random gradient samples $G(x,\xi)$. After collecting each sample, it determines whether to terminate the test and if yes, which value to output. The termination and decision rules are chosen to satisfy a confidence level $1-\bp$ to ensure that the resulting random walk is biased toward the target $x^*$, where $\bp$ can be any value in $(0,1-\frac{1}{\sqrt[3] 2})$.  
By convention, we define the output of the test at $x=0$ to be $-1$, and at $x=1$ to be $1$, without performing the test.

The construction of the termination rule exploits the law of the iterated logarithm and depends on the noise characteristics. We consider separately the cases of  sub-Gaussian and heavy-tailed distributions. 

\subsubsection{Sub-Gaussian Distributions}

For sub-Gaussian distributed gradient samples, the test statistics can simply be the sample mean $\overline{g}_s(x)$ given by
\begin{equation}
\overline{g}_s(x) = \frac{1}{s}\sum_{t=1}^s  G(x,\xi_t).   
\end{equation}
For a given confidence level parameter $\bp$, the sequential test is given Fig.~\ref{Fig:STest-subG}, where $\sigma^2$ is the sub-Gaussian
parameter specified in~\eqref{SubGAss}.


\begin{figure}[H]
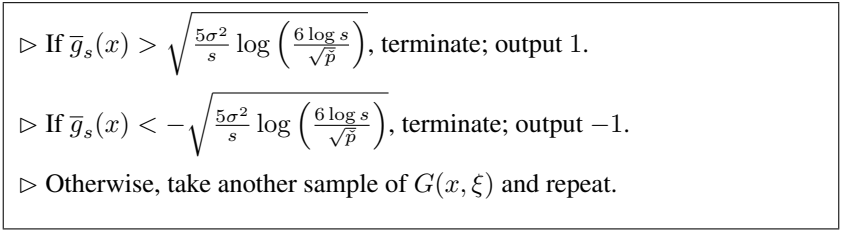

\begin{center}
\noindent\fbox{
\parbox{4.25in}{
{

$\rhd$ If $\overline{g}_s(x)>\sqrt{\frac{5 \sigma^2}{ s} \log\left(\frac{6 \log s}{\sqrt{ \bp}}\right)}$, terminate; output $1$.\\[0.5em]
$\rhd$ If $\overline{g}_s(x) <-\sqrt{\frac{5 \sigma^2}{ s} \log\left(\frac{6\log s}{\sqrt{ \bp}}\right)}$, terminate; output $-1$.\\[0.5em]
$\rhd$ Otherwise, take another sample of $G(x,\xi)$ and repeat.\\
}
}}
\caption{The sequential test at a sampling point $x$ under sub-Gaussian noise.}\label{Fig:STest-subG}
\end{center}
\end{figure}


\subsubsection{Heavy-Tailed Distributions}

For heavy-tailed distributions with a bounded $b$-th ($b>1$) moment,
define the truncated sample mean of gradient obtained from $s$ observations under a given confidence-level parameter $\bp > 0$  as follows:
\begin{eqnarray}\label{Trunc}
\widehat{g}_{s, \bp}(x) = \frac{1}{s}\sum_{t=1}^{s}G(x,\xi_t)\mathds{1}\{ |G(x,\xi_t)|\leq B_t \},
\end{eqnarray}
where 
\begin{eqnarray}\nn
B_t &=&  B_0\left( \frac{t}{\Log(t)}\right)^{\frac{1}{b}},\\\nn
\Log(t) &=& 10^b \log \left( \frac{12 \max\{ \log(t), 2 \}}{b \sqrt{ \bp}} \right),\\
B_0&=& \max  \left\{ \left( \frac{2^{\frac{2 + b}{b}}}{\Log(1)^{\frac{2-b}{b}}} \frac{15u}{3 - \sqrt{2}} \right)^{\frac{1}{b}}, \left( \frac{4 \sqrt{2}u \log 2}{ \sqrt{ \log \left(\log 3\right) } } \right)^{\frac{1}{b}}, {\left(2\sqrt{2} bu 10^{\frac{b}{2}} \right)} \right\}.
\end{eqnarray}
 In the truncated sample mean, the $t$-th sample is compared to a threshold $B_t$ and replaced with $0$ if its value exceeds the threshold. The resulting sequential test is given in Fig.~\ref{Fig:STest-HT}, where $u$ is the bound on the $b$-th moment as given in~\eqref{Trunc}.

\begin{figure}[h]
\begin{center}
\noindent\fbox{
\parbox{5.3in}{
{
\begin{itemize}
\item[$\rhd$] If $\displaystyle \widehat{g}_{s,\bp}(x)> \sqrt{\frac{B_0^2}{2}s^{\frac{2-2b}{b}} \log \left( \frac{12 \log s}{b  \sqrt{ \bp}}\right)} + \frac{1}{s} \sum_{t = 1}^{s} \frac{u}{B_t^{b-1}}$, terminate; output $1$.
\item[$\rhd$]If $\displaystyle \widehat{g}_{s,\bp }(x)<- \sqrt{\frac{B_0^2}{2}s^{\frac{2-2b}{b}} \log \left( \frac{12 \log s}{b \sqrt{ \bp}}\right)} - \frac{1}{s} \sum_{t = 1}^{s} \frac{u}{B_t^{b-1}}$, terminate; output $-1$.
\vspace{.4em}
\item[$\rhd$] Otherwise, take another sample of $G(x,\xi)$ and repeat.
\end{itemize}
}
}}
\caption{The sequential test at a sampling point $x$ under heavy-tailed noise.}\label{Fig:STest-HT}
\end{center}
\end{figure}

\section{Regret Analysis}\label{Sec:Analysis}

In this section, we provide regret analysis of RWT under variant function and noise characteristics. Corresponding to the two components---the global random walk and the local sequential test---of the policy, the analysis builds on establishing the convergence rate of the random walk towards $x^*$ and  the sample complexity of the sequential test. Each is given in a lemma in the subsequent sections.  

\subsection{The Geometric Convergence Rate of the Random Walk}

Let $n$ denote the index of the steps taken by the random walk. Let $x_{(n)}$ denote the position of the random walk after $n$ steps. In particular, $x_{(0)}$ is the root node.   
Let $\Delta_{x_{(n)}}=\max_{x\in x_{(n)}} |x-x^*|$ denote the maximum distance between a point in the interval corresponding to $x_{(n)}$ and $x^*$. Lemma~\ref{Lemma2} establishes a high-probability upper bound on $\Delta_{x_{(n)}}$ after $n$ steps are taken by the random walk.

\begin{lemma}\label{Lemma2}
With probability at least~$1-\exp(-\frac{n(2p-1)^2}{2})$, we have
\begin{eqnarray}
\Delta_{x_{(n)}}\le  2^{-\frac{n(2p-1)}{2}},
\end{eqnarray}
where $p\ge (1-\bp)^3>\frac{1}{2}$ is the bias of the walk. 
\end{lemma}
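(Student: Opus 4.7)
The plan is to couple the tree walk $(x_{(n)})_{n\ge 0}$ to a one-dimensional biased $\pm 1$ random walk, then apply a standard concentration inequality. Let $\mathcal{T}^*$ denote the (infinite) sequence of ancestor nodes whose intervals contain $x^*$. For any node $v$ of the tree let $d(v)$ be its depth and $L(v)$ the depth of the lowest common ancestor of $v$ and $\mathcal{T}^*$, and define the potential
\begin{equation*}
D_n \;=\; 2L(x_{(n)}) - d(x_{(n)}).
\end{equation*}
Then $D_0=0$; $D_n = d(x_{(n)})$ when $x_{(n)}\in\mathcal{T}^*$; and $D_n < L(x_{(n)})$ otherwise.

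First I would verify the geometric bound $\Delta_{x_{(n)}} \le 2^{-D_n}$. On the target path, the interval at $x_{(n)}$ has length $2^{-d(x_{(n)})}$ and contains $x^*$, so $\Delta_{x_{(n)}} \le 2^{-d(x_{(n)})} = 2^{-D_n}$. Off the target path, the LCA's interval of length $2^{-L(x_{(n)})}$ contains both $x_{(n)}$ and $x^*$, and $D_n < L(x_{(n)})$ gives $2^{-L(x_{(n)})} < 2^{-D_n}$.

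The heart of the argument is a case analysis of $D_n - D_{n-1}$. Call a step \emph{correct} when the move matches what the algorithm would do were all three local tests at $x_{(n-1)}$ to output the true sign of the gradient. If $x_{(n-1)}\in\mathcal{T}^*$, the correct move descends to the child containing $x^*$, giving $D_n = D_{n-1}+1$, while either of the two other transitions (to the wrong child or to the parent) gives $D_n = D_{n-1}-1$. If $x_{(n-1)}\notin\mathcal{T}^*$, the three true gradients share a common sign, the correct output sequence is $\{1,1,1\}$ or $\{-1,-1,-1\}$ and the corresponding move is to the parent, yielding $D_n = D_{n-1}+1$; any move to a child yields $D_n = D_{n-1}-1$. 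Since the three tests draw fresh i.i.d.\ samples and each errs with probability at most $\bp$, the conditional probability of a correct step given the history is at least $(1-\bp)^3$, which is the bias $p$ used in the lemma.

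The final step is a routine concentration bound. The $\pm 1$-valued increments $D_n-D_{n-1}$ have conditional mean at least $2p-1$, so $n(2p-1)-D_n$ is a supermartingale and $D_n$ stochastically dominates a simple $\pm 1$ biased walk with success probability $p$. Azuma--Hoeffding (or a Chernoff bound applied to the dominating binomial) then yields $\Pr\{D_n < n(2p-1)/2\} \le \exp(-c\,n(2p-1)^2)$ for an explicit $c>0$, and combining with $\Delta_{x_{(n)}}\le 2^{-D_n}$ gives the lemma. The main obstacle is the case analysis above---verifying that both on- and off-path transitions produce increments of exactly $\pm 1$ with a uniform conditional bias---so that the tree walk cleanly couples to a one-dimensional biased random walk regardless of its history; the concentration step itself is routine.
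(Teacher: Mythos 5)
Your proof is correct and follows essentially the same route as the paper's Appendix~A: your potential $D_n = 2L(x_{(n)}) - d(x_{(n)})$ coincides with the paper's cumulative value $V_n$ of correct-minus-incorrect steps, and the final concentration step is the same Hoeffding/Azuma bound on a $\pm1$ walk with bias $p\ge(1-\bp)^3$. If anything, your case analysis supplies a rigorous justification of the paper's informal claim that $\Delta_{x_{(n)}}\le 2^{-V_n}$ (which holds with slack, not by halving, when an off-path walk correctly moves to its parent), and your supermartingale/stochastic-domination phrasing correctly addresses the fact that the step indicators need not be independent.
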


\emph{Proof:} See Appendix~A.

 Lemma~\ref{Lemma2} shows that the random walk converges at a geometric rate to $x^*$. Notice that this result is independent of the characteristics of the function or noise. 

\subsection{The Sample Complexity of the Local Sequential Test}

\subsubsection{Sub-Gaussian Distributions} The following lemma gives an upper bound on the sample complexity and  error probability of the local sequential test under sub-Gaussian distributions.

\begin{lemma}\label{SCofP}
Let $\tau(x)$ denote the termination time of the local sequential test at an arbitrary query point $x\in\X$ as given in Fig.~\ref{Fig:STest-subG}. Under sub-Gaussian distributions defined in~\eqref{SubGAss}, the sample complexity $\E[\tau(x)]$ of the local sequential test  is given by\footnotetext{In the analysis of the local sequential test, we assume that there are at least $3$ samples taken before stopping the test.}
\begin{eqnarray}
\mathbb{E}[\tau(x)]\le \frac{40 \sigma^2}{ |g(x)|^2} \log \left( \frac{12}{ \sqrt{ \bp}} \log \left( \frac{240 \sigma^2 }{\sqrt{ \bp}  |g(x)|^2} \right)\right) + 2.
\end{eqnarray}

The probabilities of an incorrect test outcome under each hypothesis on the sign of $g(x)$ are bounded as follows:
\begin{eqnarray}\nn
&\mathbb{P} \left[\overline{g}_{\tau}(x)> \sqrt{\frac{5 \sigma^2}{ \tau} \log\left(\frac{6 \log \tau}{ \sqrt{ \bp}}\right)}~|\, g(x)<0 \right]\le \bp,\\\label{ProbL2}
&\mathbb{P} \left[\overline{g}_{\tau}(x)<- \sqrt{\frac{5 \sigma^2}{ \tau} \log\left(\frac{6\log \tau}{ \sqrt{ \bp}}\right)} ~|\, g(x)>0 \right]\le \bp.
\end{eqnarray}

\end{lemma}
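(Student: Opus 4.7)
The lemma splits into two parts: a bound on the expected termination time $\E[\tau(x)]$ and the two error-probability bounds in~\eqref{ProbL2}. Both rest on the same sub-Gaussian concentration for the running sum $S_s := \sum_{t=1}^s (G(x,\xi_t)-g(x))$, which inherits from~\eqref{SubGAss} the moment bound $\E[\exp(\lambda S_s)] \le \exp(s\lambda^2\sigma^2/2)$ and hence makes $\exp(\lambda S_s - s\lambda^2\sigma^2/2)$ a positive supermartingale. I would prove the error-probability bound first, because the same machinery pins down a deterministic ``safe time'' used in the sample-complexity argument.

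\textbf{Error probability via peeling.} Focus on the first line of~\eqref{ProbL2}, under the hypothesis $g(x)<0$. Since $\tau$ is a stopping time, a single Hoeffding-type inequality at fixed $s$ is not enough; I need a bound uniform in $s$. The standard device is a peeling argument: partition $\{3,4,\dots\}$ into dyadic blocks $I_k=\{2^k,\dots,2^{k+1}-1\}$ for $k\ge 1$ and, on each block, apply Doob's maximal inequality to the supermartingale above with $\lambda$ tuned to the block's minimum index. The threshold form $\sqrt{5\sigma^2\log(6\log s/\sqrt{\bp})/s}$ is designed so that the failure probability on block $k$ is at most $\bp\cdot c_k$ with $\sum_k c_k\le 1$; the nested $\log\log s$ factor (reminiscent of the law of the iterated logarithm) is exactly what makes this union bound over infinitely many blocks converge. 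Because $g(x)<0$, the event in~\eqref{ProbL2} is contained in $\{\exists s\ge 3:\; S_s>\sqrt{5\sigma^2 s\log(6\log s/\sqrt{\bp})}\}$, and the peeling bound then yields $\bp$. The symmetric calculation handles the second line of~\eqref{ProbL2}.

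\textbf{Sample complexity.} For the expectation, I would first locate a deterministic ``safe time'' $s^\star$ past which the threshold $T(s):=\sqrt{5\sigma^2\log(6\log s/\sqrt{\bp})/s}$ has shrunk below $|g(x)|/2$. Solving $T(s^\star)\le |g(x)|/2$ self-referentially yields
\begin{equation*}
s^\star \;=\; \Theta\!\left(\frac{\sigma^2}{|g(x)|^2}\,\log\!\left(\frac{1}{\sqrt{\bp}}\log\!\left(\frac{\sigma^2}{\sqrt{\bp}\,|g(x)|^2}\right)\right)\right),
\end{equation*}
which matches the stated bound once the constant $40$ is chosen to absorb the slack in the self-reference and a constant like $240$ appears inside the inner logarithm after substituting $s^\star$ back into $\log s$. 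On the event $\{\overline{g}_{s^\star}(x)-g(x)>-|g(x)|/2\}$ we have $\overline{g}_{s^\star}(x)<g(x)/2<-T(s^\star)$, so the test must already have stopped by time $s^\star$. Writing $\E[\tau]=\sum_{s\ge 0}\mathbb{P}[\tau>s]\le s^\star+\sum_{s>s^\star}\mathbb{P}[\tau>s]$ and bounding each tail term by the single-time sub-Gaussian estimate $\mathbb{P}[\tau>s]\le \exp(-cs|g(x)|^2/\sigma^2)$ for some absolute $c>0$, the tail sum is a convergent geometric series whose contribution is absorbed into the additive ``$+2$'' in the stated bound.

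\textbf{Main obstacle.} The sub-Gaussian steps themselves are routine; the real work is the peeling. Pinning down the precise constants (the $5$ inside the threshold, the $40$ in the sample-complexity bound, and the nested factors such as $6$, $12$, and $240$ in the logarithms) requires calibrating the per-block thresholds so that the summed failure probabilities telescope cleanly, and carefully tracking how the self-consistent solution of $T(s^\star)\le|g(x)|/2$ injects an extra logarithmic factor. The bulk of the appendix effort will be this bookkeeping rather than any genuinely new estimate.
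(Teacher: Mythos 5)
Your proposal follows essentially the same route as the paper's Appendix~B: a LIL-style peeling argument with Doob's maximal inequality applied to the exponential (sub)martingale for the error probabilities, and a deterministic crossing time $s^\star$ (the paper's $s_1=\lceil s_0\rceil$ with $s_0=\frac{20\sigma^2}{\mu^2}\log(\cdots)$) combined with a Chernoff tail bound for the expected stopping time. Two bookkeeping caveats: since the threshold constant $5$ and $\beta=\sqrt{\bp}/6$ are fixed by the algorithm, dyadic blocks are too coarse (the per-block exponent becomes $5/4$, so the union bound only yields $O(\bp^{5/8})$ rather than $\bp$), which is why the paper peels with ratio $\theta=1.25$ to get exponent $2$; and the geometric tail $\sum_{s>s^\star}e^{-cs|g(x)|^2/\sigma^2}=O(\sigma^2/|g(x)|^2)$ is not a constant, so it is absorbed by doubling $s_0$ (whence $40\sigma^2$ instead of $20\sigma^2$) rather than by the additive $+2$, which comes from the ceiling operation.
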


\emph{Proof:} See Appendix~{B}.




Lemma~\ref{SCofP} shows that the error probability of the sequential test at all query points $x\in\X$ is upper bounded by $\bp$. The condition for the random walk to move in the right direction is that the output of all three tests carried out on the boundary points and the middle point of the current interval are correct. Thus, the probability $p$ that the random walk moves in the right direction satisfies $p\ge (1-\bp)^3$ which indicates $p>\frac{1}{2}$ by the choice of $\bp \in(0,1-\frac{1}{\sqrt[3] 2})$. This ensures that the random walk is biased toward $x^*$ as required for the geometric convergence of the random walk as specified in Lemma~\ref{Lemma2}.  

To bound the test error, we employ techniques similar to the ones used in the proof of the law of iterated logarithm. By bounding the error probability for geometrically increasing intervals, the total probability of error can be bounded using the union sum and the convergence for the Riemann Zeta function for index greater than 1. The upper bound $\bp$ on the error probabilities is ensured by  choosing appropriate constants in the termination threshold.

\subsubsection{Heavy-Tailed Distributions}

Analogous to Lemma~\ref{SCofP}, we have the following result on the sample complexity and error probability of the sequential test under heavy-tailed distributions.

\begin{lemma}\label{heavy_tailed_test}
Let $\tau(x)$ denote the termination time of the local sequential test at an arbitrary query point $x\in\X$ as given in Fig.~\ref{Fig:STest-HT}. Under heavy-tailed distributions satisfying the bounded $b$-th ($b>1$) moment condition given in~\eqref{HTAss}, the sample complexity $\E[\tau(x)]$ of the local sequential test  is given by

\begin{align*}
    \E[\tau(x)] & \leq  \gamma_b \left(\left( \frac{{8}B_0^2}{|g(x)|^2} \log \left( \frac{18}{c_b} \log \left( {\frac{{144}B_0^2 }{|g(x)|^2 c_b  }} \right) \right) \right)^{\frac{b}{2(b-1)}}      + {8}\right), 
\end{align*}
where $\gamma_b = {\left(\Gamma \left(\frac{2b - 1}{b -1}\right) \left(\frac{{1}}{B_0}\left( \frac{u}{3} + \frac{1}{8} \right)\right)^{\frac{b}{b-1}} + 1\right) }$, $\displaystyle \Gamma(z) = \int_0^{\infty} x^{z-1} e^{-z} dz$ is the Gamma function and $c_{b} = (b -1) \sqrt{\bp}$.
The probabilities of an incorrect test outcome under each hypothesis on the sign of $g(x)$ 
are upper bounded by $\bp$.
\end{lemma}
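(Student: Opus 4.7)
The plan is to adapt the peeling argument underlying Lemma~\ref{SCofP} to the heavy-tailed setting through a bias/variance decomposition of the truncated sample mean. For each summand, Markov's inequality applied to $|G(x,\xi_t)|^b$ yields $|\E[G(x,\xi_t)\mathds{1}\{|G(x,\xi_t)|\le B_t\}]-g(x)|\le u/B_t^{b-1}$, so the total truncation bias in $\widehat{g}_{s,\bp}(x)$ is at most $(1/s)\sum_{t=1}^s u/B_t^{b-1}$ -- precisely the deterministic offset in the termination rule of Fig.~\ref{Fig:STest-HT}. For the variance, $\E[G^2\mathds{1}\{|G|\le B_t\}]\le u B_t^{2-b}$ when $1<b\le 2$ (using $|G|^{2-b}\le B_t^{2-b}$ on the truncation event), and the ordinary second moment bound $u^{2/b}$ suffices for $b>2$. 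With the centered truncated increments bounded by $2B_t$, these are exactly the two inputs to Bernstein's inequality.

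To control the error probability, I would partition $s$ into dyadic blocks $[2^k,2^{k+1})$ and apply a maximal Bernstein inequality on each block, using that $B_t$ and the variance proxy vary by at most a constant factor within a block. The stochastic threshold $\sqrt{(B_0^2/2)s^{(2-2b)/b}\log(12\log s/(b\sqrt{\bp}))}$ is engineered so that on block $k$ the Bernstein tail is at most a universal constant times $\bp/k^2$, whence the union bound over $k\ge 0$ converges as a Riemann zeta series with index $>1$ -- the same device used in Lemma~\ref{SCofP}. The three lower bounds defining $B_0$ exist to guarantee that (i) the variance-dominated regime of Bernstein's inequality governs every dyadic block for every admissible $\bp$, and (ii) the boundary blocks (where $\log\log s$ is small or undefined) still contribute at most $\bp$ in total. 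Combining the deterministic and stochastic parts then shows that the probability of crossing the wrong threshold is at most $\bp$.

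For the sample-complexity bound, let $s^*(x)$ be the smallest $s$ at which $|g(x)|$ exceeds twice the threshold in Fig.~\ref{Fig:STest-HT}. On the $(1-\bp)$-probability event of the previous paragraph, the test terminates by time $s^*(x)$. Both the bias term $(u/B_0^{b-1})(\Log(s)/s)^{(b-1)/b}$ and the stochastic term $B_0 s^{(1-b)/b}\sqrt{\log(\log s/\sqrt{\bp})}$ scale as $s^{-(b-1)/b}$ times slowly-varying factors, so solving ``threshold $=|g(x)|/2$'' for $s$ and isolating the outer $\log$ produces $s^*(x)\sim(B_0^2/|g(x)|^2)^{b/(2(b-1))}\,[\log(c_b^{-1}\log(B_0^2/(c_b|g(x)|^2)))]^{b/(2(b-1))}$, the main term of the claimed bound. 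For the expectation I write $\E[\tau(x)]\le s^*(x)+\sum_{s>s^*(x)}\Pr[\tau(x)>s]$; the tail decays like $\exp(-c\,s^{2(b-1)/b})$ by the Bernstein argument, and the change of variable $v=s^{2(b-1)/b}$ turns the sum into an integral of the form $\int v^{(2b-1)/(b-1)-1}e^{-cv}\,dv$, producing the $\Gamma((2b-1)/(b-1))$ factor inside $\gamma_b$. The main obstacle is the constant-chasing in the peeling step: because $B_t$, the variance proxy $uB_t^{2-b}$, and the bias $u/B_t^{b-1}$ are all $t$-dependent, one must verify that Bernstein's variance-dominated regime holds across every dyadic block and every $\bp\in(0,1-2^{-1/3})$, and the three-part definition of $B_0$ is the bookkeeping device that makes this possible.
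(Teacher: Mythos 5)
Your outline follows essentially the same route as the paper's Appendix~C: split $\widehat g_{s,\bp}(x)-g(x)$ into a martingale of centered truncated increments plus a deterministic truncation bias controlled by $u/B_t^{b-1}$ (via Markov on the $b$-th moment), bound the increment variance by $uB_t^{2-b}$ and the range by $2B_t$, apply a maximal Bernstein/Freedman inequality over geometrically growing blocks with an iterated-logarithm peeling that sums as a zeta series, then solve ``threshold $=|g(x)|/2$'' for the crossing time and add a tail sum. One step, however, would fail as written: your claimed tail $\Pr[\tau>s]\lesssim\exp(-c\,s^{2(b-1)/b})$ corresponds to the variance-dominated regime of Bernstein's inequality, but for $n$ beyond the crossing time the relevant deviation is of order $n|g(x)|/2$, so the range term $B_n\cdot n|g(x)|\sim n^{1+1/b}$ dominates the variance term $\sum_t uB_t^{2-b}\sim n^{2/b}$ in the denominator (since $1+1/b>2/b$ for $b>1$), giving the slower tail $\exp(-c\,n^{(b-1)/b})$. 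Your change of variables is also inconsistent with your own claim: $v=s^{2(b-1)/b}$ would produce $\Gamma\bigl(\tfrac{b}{2(b-1)}\bigr)$, not $\Gamma\bigl(\tfrac{2b-1}{b-1}\bigr)$. With the corrected exponent, the substitution $v=s^{(b-1)/b}$ yields $\tfrac{b}{b-1}\int_0^\infty v^{1/(b-1)}e^{-cv}\,dv=\Gamma\bigl(\tfrac{b}{b-1}+1\bigr)=\Gamma\bigl(\tfrac{2b-1}{b-1}\bigr)$, recovering the constant $\gamma_b$ in the statement; also note the error-probability peeling is the place where the variance and range contributions are of the \emph{same} order $N^{2/b}$, and the three-part definition of $B_0$ (together with the condition enforcing the bias term to be at most $|g(x)|/8$ at the crossing time) is what absorbs both into the single $\tfrac{B_0^2}{10}N^{2/b}$ denominator. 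These are repairs to the constant-chasing, not to the architecture of the argument.
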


\emph{Proof:} See Appendix C.

\subsection{The Cumulative Regret}

We are now ready to provide the regret performance of RWT under various cases of the function characteristics (convex, strongly convex, non-differentiable at $x^*$) and noise characteristics (sub-Gaussian, heavy-tailed).

\subsubsection{Sub-Gaussian Distributions}

The following theorem provides upper bound on regret of RWT under sub-Gaussian distributions. The regret order varies based on the function characteristics. 

\begin{theorem}\label{The:SubG}
Let $\bp\in(0,1-\frac{1}{\sqrt[3]2})$ be the chosen parameter of the sequential test and $p$ the resulting bias of the random walk. Let $g_{\footnotesize \mbox{max}}=\max_{x\in \X}g(x)$. For sub-Gaussian distributions with parameter $\sigma^2$, the regret of RWT is upper bounded as follows.
\begin{itemize}
\item For convex functions,
\end{itemize}
{\small{\begin{eqnarray}\nn
&&\hspace{-2.5em}{R}_{\mbox{\footnotesize RWT}}(T)\le \frac{6}{2p-1}\sqrt{ 10\sigma^2T\log T\log\left(\frac{12}{\sqrt{\bp}}\log\frac{2(2p-1)^2T}{3\log T \sqrt{\bp}}\right)}
+ \frac{3g_{\footnotesize \mbox{max}}}{2p-1}\sqrt{2T\log T}+ g_{\footnotesize \mbox{max}}(\log T+4).
\end{eqnarray}}}

\begin{itemize}
\item For $\alpha$-strong convexity functions, 
\end{itemize}
\begin{eqnarray}\nn
&&\hspace{-2.5em}{R}_{\mbox{\footnotesize RWT}}(T)\le
\frac{360\sigma^2\log T}{2\alpha(2p-1)^2}\log\left(\frac{12}{\sqrt{\bp}}\log\frac{2(2p-1)^2T}{3\log T\sqrt{\bp}}\right)+ \frac{18g^2_{\footnotesize \mbox{max}}\log T}{2\alpha(2p-1)^2}+ g_{\footnotesize \mbox{max}}(\log T+4). 
\end{eqnarray}

\begin{itemize}
\item For functions that are non-differentiablity at $x^*$ with a $\delta>0$ lower bound on the magnitude of gradient, 
\end{itemize}
\begin{eqnarray}\nn
&&\hspace{-2em}{R}_{\mbox{\footnotesize RWT}}(T)\le
\frac{9g_{\footnotesize \mbox{max}}\log T}{(2p-1)^2}\bigg(\frac{40\sigma^2}{\delta^2}\log\left(\frac{12}{\sqrt{\bp}}\log\frac{240\sigma^2}{\sqrt{\bp}\delta^2}\right)+2\bigg)+ g_{\footnotesize \mbox{max}}(\log T+4). 
\end{eqnarray}
\end{theorem}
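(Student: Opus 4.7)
The plan is to decompose $R(T) = \sum_{n=1}^{N} R_n$ over the $N$ steps of the biased random walk, where $R_n$ is the regret accumulated during step $n$ and $s_n$ is the total number of gradient samples drawn at the three test points of $x_{(n)}$, so that $\sum_n s_n = T$. On the high-probability event of Lemma~\ref{Lemma2}, every sample drawn in step $n$ lies within $\Delta_{x_{(n)}} \le 2^{-n(2p-1)/2}$ of $x^*$. I will use a cutoff $n_0 := \lceil 2\log T/(2p-1)^2 \rceil$ that separates the ``trust Lemma~\ref{Lemma2}'' regime from the pre-convergence regime: for $n < n_0$ I use the trivial bound $\Delta_n \le 1$, while for $n \ge n_0$ the failure probability $\exp(-n(2p-1)^2/2) \le 1/T$ contributes only $O(g_{\footnotesize\mbox{max}})$ to $\E[R(T)]$ after summing. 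Throughout I abbreviate $C_1 := 40\sigma^2\log\log(\cdot) + 2g_{\footnotesize\mbox{max}}^2$, which by Lemma~\ref{SCofP} bounds $\E[\tau(x)]\,g(x)^2$ at every $x$.

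For the convex case, I would start from $f(x_t) - f(x^*) \le |g(x_t)|\,|x_t - x^*|$ and apply AM--GM to get $\E[\tau(x)]\,|g(x)| \le \sqrt{\E[\tau(x)]}\sqrt{C_1}$ at each test point. A Cauchy--Schwarz across the three test points of step $n$ yields $\E[R_n] \le \sqrt{3C_1}\,\Delta_n\sqrt{\E[s_n]}$, and a second Cauchy--Schwarz over $n$ gives $\E[R(T)] \le \sqrt{3C_1\,T\,\sum_n \Delta_n^2}$. Splitting $\sum_n \Delta_n^2$ at $n_0$ bounds it by $n_0 + O(1) = O(\log T/(2p-1)^2)$, which after unpacking $C_1$ yields the claimed $\sqrt{\sigma^2 T \log T \log\log(\cdot)}$ and $g_{\footnotesize\mbox{max}}\sqrt{T\log T}$ terms.

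For the $\alpha$-strongly convex case, the sharper inequality $f(x) - f(x^*) \le |g(x)|^2/(2\alpha)$ (obtained from~\eqref{SC} by maximizing the right-hand side in $y - x$) turns the per-point budget $\E[\tau(x)] g(x)^2 \le C_1$ directly into $\E[R_n] \le 3C_1/(2\alpha)$ per step. Summing over $n \le n_0$ gives $O(\log T \cdot (\sigma^2\log\log(\cdot) + g_{\footnotesize\mbox{max}}^2)/(\alpha(2p-1)^2))$; for $n > n_0$ I revert to the convex bound $R_n \le g_{\footnotesize\mbox{max}}\, s_n \Delta_n$ and use $\sum_{n > n_0} s_n \Delta_n \le T\Delta_{n_0} = O(1)$ for every $p \in (1/2, 1)$. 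For the non-differentiable case, $|g| \ge \delta$ collapses Lemma~\ref{SCofP} into the uniform bound $\E[\tau(x)] \le K := (40\sigma^2/\delta^2)\log\log(\cdot) + 2$, so $\E[s_n] \le 3K$ and $\E[R(T)] \le g_{\footnotesize\mbox{max}} \cdot 3K \cdot n_0 + g_{\footnotesize\mbox{max}} K \sum_{n > n_0} \Delta_n = O(g_{\footnotesize\mbox{max}} K \log T/(2p-1)^2)$, with the geometric tail past $n_0$ summing to $O(1)$. In all three cases the residual $g_{\footnotesize\mbox{max}}(\log T + 4)$ summand absorbs the failure-event contribution and boundary terms.

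The main obstacle is the convex case, whose double Cauchy--Schwarz argument must simultaneously exploit $\E[\tau(x)]g(x)^2 \le C_1$ from Lemma~\ref{SCofP}, preserve the aggregate identity $\sum_n \E[s_n] = T$, and route the $\log T$ factor through the cutoff $n_0$ without double-counting against the failure event. The strongly convex and non-differentiable cases reduce to plug-in uses of sharper per-step bounds once the convex-case template is set up, so the bulk of the technical work is in the convex analysis.
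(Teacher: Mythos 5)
Your overall route is the same as the paper's: decompose the regret over the steps of the random walk, combine the sample-complexity bound of Lemma~\ref{SCofP} with the geometric convergence of Lemma~\ref{Lemma2}, split at a cutoff of order $\log T/(2p-1)^2$ steps, and in the convex case trade off $\E[\tau_m]$ against $|g(x_m)|$ under the total budget $\sum_m\E[\tau_m]\le T$. The one genuine difference is that where the paper formulates this tradeoff as a constrained maximization and invokes a Lagrangian argument to conclude the worst case is $\E[\tau_m]=T/(3n_0)$, you use AM--GM plus a double Cauchy--Schwarz ($\E[\tau]\,|g|\le\sqrt{\E[\tau]}\sqrt{\E[\tau]g^2}$, then $\sum_n\sqrt{\E[s_n]}\,\Delta_n\le\sqrt{T\sum_n\Delta_n^2}$). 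That substitution is clean and, if anything, more self-contained than the paper's optimization step; it yields the same orders with comparable constants. One caveat you share with the paper but state more baldly: your $C_1$ is not actually a uniform constant, since the $\log\log$ argument in Lemma~\ref{SCofP} contains $1/|g(x)|^2$; making the $\log\log T$ factor in the theorem rigorous requires either the paper's inversion step (its Eq.~\eqref{gUB}, replacing $g^2$ inside the iterated logarithm by a function of $\E[\tau_m]$) or an explicit case split on $|g(x_m)|$ versus a polynomial-in-$1/T$ threshold.

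There is, however, one concrete quantitative error: your cutoff $n_0=\lceil 2\log T/(2p-1)^2\rceil$ is too small for the tail claims in the strongly convex case. With that choice, Lemma~\ref{Lemma2} gives $\Delta_{n_0}\le 2^{-n_0(2p-1)/2}=T^{-\log 2/(2p-1)}$, so $T\Delta_{n_0}=T^{1-\log 2/(2p-1)}$, which is \emph{not} $O(1)$ whenever $2p-1>\log 2\approx 0.693$ (i.e., $p\gtrsim 0.85$, which is attainable since $\bp$ may be small). Since the strongly convex bound's main term is only $O(\log T\log\log T)$, a $T^{1-\log 2/(2p-1)}$ tail would dominate it and the claimed bound would fail for that part of the parameter range. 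The fix is exactly the paper's choice of constant: take $n_0\ge \frac{2}{\log 2}\cdot\frac{\log T}{(2p-1)^2}$ (the paper uses $\frac{3}{(2p-1)^2}\log t$, for which $\frac{3\log 2}{2(2p-1)}\ge 1$ for all $p<1$, giving $\Delta\le 1/t$ and the harmonic-sum tail $g_{\footnotesize\mbox{max}}(\log T+4)$). Note the failure-probability side of your argument, $\exp(-n_0(2p-1)^2/2)\le 1/T$, is unaffected, and the convex and non-differentiable cases survive because they rely only on $\sum_n\Delta_n^2=O(\log T)$ and $\sum_{n>n_0}\Delta_n=O(1)$ respectively, both of which hold for any constant in $n_0$; the gap is isolated to the strongly convex tail.
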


\emph{Proof:} See Appendix D.

Theorem~\ref{The:SubG} shows $\O(\sqrt{T\log T\log\log T})$, $\O(\log T\log\log T)$, and $\O(\log T)$ regrets for objective functions $f(x)$ that are convex, $\alpha$-strongly convex, and non-differentiable at $x^*$, respectively. Note that while the confidence parameter $\bp$ affects the leading constants of the regret, choosing any value in $(0,1-\frac{1}{\sqrt[3]2})$ ensures these regret orders. These (near-)optimal regret orders are thus achieved without any tuning parameter or prior knowledge of the function characteristics.







\subsubsection{Heavy-Tailed Distributions} We have the following corresponding theorem for heavy-tailed distributions. 

\begin{theorem}\label{The:HT}
Let $\bp\in(0,1-\frac{1}{\sqrt[3]2})$ be the chosen parameter of the sequential test and $p$ the resulting bias of the random walk. Let $g_{\footnotesize \mbox{max}}=\max_{x\in \X}g(x)$. Under heavy-tailed distributions satisfying the bounded $b$-th ($b>1$) moment condition  in~\eqref{HTAss}, the regret of RWT is upper bounded as follows.

\begin{itemize}
\item For convex functions,
\end{itemize}
\begin{eqnarray}\nn
&&\hspace{-2.5em}{R}_{\mbox{\footnotesize RWT}}(T)\le 2\sqrt2B_0\left(
\frac{9\gamma_b}{(2p-1)^2}\right)^{\frac{b-1}{b}}T^{\frac{1}{b}}(\log T)^{\frac{b-1}{b}}\sqrt{\log \Bigg( \frac{18}{c_b}\log \left(
\frac{18}{c_b} \left(\frac{T}{9\gamma_b\log T}\right)^{\frac{2(b-1)}{b}}
\right) \Bigg)}\\\nn
&&+8\left(
\frac{9\gamma_b}{(2p-1)^2}\right)^{\frac{b-1}{b}}g_{\footnotesize \mbox{max}}T^{\frac{1}{b}}(\log T)^{\frac{b-1}{b}}+ g_{\footnotesize \mbox{max}}(\log T+4). 
\end{eqnarray}

\begin{itemize}
\item For $\alpha$-strong convexity functions, 
\end{itemize}
\begin{eqnarray}\nn
&&\hspace{-2.5em}{R}_{\mbox{\footnotesize RWT}}(T)\le \frac{4B_0^2}{\alpha}\left(
\frac{9\gamma_b}{(2p-1)^2}\right)^{\frac{2(b-1)}{b}}T^{\frac{2-b}{b}}(\log T)^{\frac{2(b-1)}{b}}\log \Bigg( \frac{18}{c_b}\log \left(
\frac{18}{c_b} \left(\frac{T}{9\gamma_b\log T}\right)^{\frac{2(b-1)}{b}}
\right) \Bigg)\\\nn
&&+\frac{4}{\alpha}\left(
\frac{9\gamma_b}{(2p-1)^2}\right)^{\frac{2(b-1)}{b}}g^2_{\footnotesize \mbox{max}}T^{\frac{2-b}{b}}(\log T)^{\frac{2(b-1)}{b}}+ g_{\footnotesize \mbox{max}}(\log T+4). 
\end{eqnarray}

\begin{itemize}
\item For functions that are non-differentiablity at $x^*$ with a $\delta>0$ lower bound on the magnitude of gradient,
\end{itemize}
\begin{eqnarray}\nn
&&\hspace{-2.5em}{R}_{\mbox{\footnotesize RWT}}(T)\le \frac{9
g_{\footnotesize \mbox{max}}\gamma_b\log T}{(2p-1)^2} \bigg( \frac{2B_0^2}{\delta^2} \log \left( \frac{18}{c_{b}} \log \left( {\frac{36B_0^2 }{\delta^2 c_{b}  }} \right) \right)^{\frac{b}{2(b-1)}} +8\bigg) + g_{\footnotesize \mbox{max}}(\log T+4). 
\end{eqnarray}

\end{theorem}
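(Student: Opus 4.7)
The plan is to follow the same structure as the proof of Theorem~\ref{The:SubG}, with Lemma~\ref{heavy_tailed_test} replacing Lemma~\ref{SCofP} as the per-query sample-complexity bound. Indexing the phases of the random walk by $n=1,2,\ldots$ and writing $\tau_n$ for the total number of gradient queries consumed during phase $n$, every query in that phase lies in the interval corresponding to $x_{(n-1)}$, so the per-phase regret obeys $R_n \le \tau_n\cdot\sup_{y\in x_{(n-1)}}(f(y)-f(x^*))$. I would control the right-hand supremum by the appropriate convexity inequality: for convex $f$ by $g_{\max}\Delta_{x_{(n-1)}}$ using $f(y)-f(x^*)\le |g(y)||y-x^*|$; for $\alpha$-strongly convex $f$ by the tighter pointwise bound $f(y)-f(x^*)\le|g(y)|^2/(2\alpha)$; and for $f$ non-differentiable at $x^*$ by $g_{\max}\Delta_{x_{(n-1)}}$ together with the uniform lower bound $|g|\ge\delta$ that forces $\tau_n$ to be a constant.

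Lemma~\ref{Lemma2} then gives $\Delta_{x_{(n-1)}}\le 2^{-(n-1)(2p-1)/2}$ with probability at least $1-\exp(-(n-1)(2p-1)^2/2)$; bounding the per-sample regret on the complementary bad event crudely by $g_{\max}$ and summing the exponentially small bad-event contributions across phases produces the lower-order $g_{\max}(\log T+4)$ term appearing in all three parts. Lemma~\ref{heavy_tailed_test} in turn bounds $\tau_n$ at each test point by roughly $|g|^{-b/(b-1)}$ up to $\log\log$ factors. In the non-differentiable case the uniform $|g|\ge\delta$ makes $\tau_n$ bounded by a constant, and combined with the $O(\log T)$ bound on the relevant number of phases and the geometric decay of $\Delta_n$ this yields the claimed $\log T$ rate. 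In the $\alpha$-strongly convex case, substituting $|g(y)|\ge\alpha|y-x^*|$ into the product $\tau(y)(f(y)-f(x^*))\le h(|g(y)|)\cdot|g(y)|^2/(2\alpha)$ at each test point and summing over phases subject to the sample budget $\sum_n\tau_n\le T$ yields the claimed $T^{(2-b)/b}(\log T)^{2(b-1)/b}\log\log T$ order after collecting constants.

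The main obstacle is the convex-only case, where no lower bound on $|g(y)|$ at the test points is available and Lemma~\ref{heavy_tailed_test} does not directly bound $\tau_n$. I would resolve this by introducing a cutoff phase $N^*$ of order $(2/(2p-1))\log_2(T^{(b-1)/b})$ and splitting $R(T)\le\sum_{n\le N^*}R_n+\sum_{n>N^*}R_n$. The first sum is bounded by plugging the Lemma~\ref{heavy_tailed_test} estimate into $R_n\le g_{\max}\tau_n\Delta_{n-1}$ and summing a series geometric in $n$ dominated by its last term, producing the $T^{1/b}(\log T)^{(b-1)/b}\sqrt{\log(\cdot)}$ main term together with the $\gamma_b^{(b-1)/b}$ prefactor visible in the statement. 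The second sum uses the uniform upper bound $\Delta_{n-1}\le T^{-(b-1)/b}$ for $n>N^*$ together with the sample budget $\sum_{n>N^*}\tau_n\le T$ to contribute the matching $g_{\max}T^{1/b}(\log T)^{(b-1)/b}$ term. Balancing the two contributions at the chosen $N^*$ reproduces the stated bound; the sub-Gaussian rate of Theorem~\ref{The:SubG} is recovered at $b=2$, providing a useful consistency check on the exponents throughout.
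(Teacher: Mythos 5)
Your skeleton (phase decomposition, Lemma~3 in place of Lemma~2's sub-Gaussian counterpart, budget constraint $\sum_n\tau_n\le T$, Lemma~2 for the tail) is the right one, and your strongly convex and non-differentiable cases are essentially on track. But there are two genuine gaps. First, the bad-event accounting does not close: a phase on which the event of Lemma~2 fails contributes $g_{\max}\,\E[\tau_n\mathds{1}(\text{bad}_n)]$ to the regret, not $g_{\max}\Pr(\text{bad}_n)$, and for a merely convex $f$ the test at a near-stationary query point can consume up to the entire remaining horizon, so the crude bound $\E[\tau_n\mathds{1}(\text{bad}_n)]\le T\,e^{-(n-1)(2p-1)^2/2}$ sums to $\Theta(T)$. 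The paper avoids this entirely: in the first regret block it never invokes the high-probability event (it uses $\Delta\le 1$ plus a worst-case budget argument), and the second block is indexed by \emph{time} $t$ rather than by phase, with a data-dependent threshold $t_0$ chosen so that for $t>t_0$ the walk has taken at least $\tfrac{3}{(2p-1)^2}\log t$ steps; the per-time-step failure probability is then $t^{-3/2}$, which is summable and yields the $g_{\max}(\log T+4)$ term. Your deterministic cutoff $N^*$ does not deliver this.

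Second, your convex-case main term is not produced by the argument you describe. The bound $R_n\le g_{\max}\tau_n\Delta_{n-1}$ never invokes the inversion of Lemma~3, so it cannot generate the $\sqrt{\log\log}$ factor, and a geometric series in $n$ (whether dominated by its first or last term) gives $O(T^{1/b})$ or $O(T^{1/b}\log T)$, not $T^{1/b}(\log T)^{(b-1)/b}$. The paper's mechanism is a worst-case allocation argument: bound $R_1\le\sum_{m=1}^{3n_0}\E[\tau_m]\,|g(x_{(m)})|$, invert the sample-complexity bound of Lemma~3 to get $|g(x_{(m)})|\lesssim B_0(\gamma_b/\E[\tau_m])^{(b-1)/b}\sqrt{\log\log(\cdot)}$, and maximize $\sum_m\E[\tau_m]^{1/b}$ subject to $\sum_m\E[\tau_m]\le T$ over the $3n_0=O(\log T)$ tests; by concavity the maximum is attained at the equal allocation $\E[\tau_m]=T/(3n_0)$, which is precisely where the $(3n_0)^{(b-1)/b}=O((\log T)^{(b-1)/b})$ factor comes from. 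This Lagrangian/equal-allocation step (and the analogous one with exponent $(2-b)/b$ in the strongly convex case) is the missing ingredient you would need to make your outline produce the stated constants and orders.
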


\emph{Proof:} See Appendix E.

Theorem~\ref{The:HT} shows $\O(T^{\frac{1}{b}}\log T^{\frac{b-1}{b}}({\log\log T})^{\frac{1}{2}})$, $\O(T^{\frac{2-b}{b}}\log T^{\frac{2(b-1)}{b}}\log\log T))$, and $\O(\log T)$ regrets for functions that are convex, $\alpha$-strongly convex, and non-differentiable at $x^*$, respectively. They match the corresponding lower bounds~\cite{Nemirovski1983} (up to poly-$\log T$ factors in the first two cases). 



\section{Simulation}\label{Simul}

In this section, we present simulation examples to demonstrate the adaptivity of RWT and its performance as compared to SGD. We also illustrate the use of local caching of side observations by exploiting the highly structured mobility of RWT in the input space.  

\subsection{Adaptivity of RWT and Performance Comparison with SGD}
We consider the following objective function over $\mathcal{X} = [0,1]$:
\[
f(x)=a|x - x^*|^{b},
\]
where $a > 0$, $b\geq 1$. This function is strongly convex with a strong-convexity parameter $\alpha = ab(b-1)(\max\{x^*, 1 - x^*\})^{b-2}$. The gradient is given by
\[
g(x) = ab \sgn(x-x^*)|x - x^*|^{b-1}. 
\]
The stochastic component in the gradient is modelled by additive Gaussian noise of zero mean and unit variance. Specifically, $G(x, \xi) = g(x) + \xi$ where $\xi \sim \mathcal{N}(0,1)$. 

RWT is carried out as described in Section \ref{Sec3} with parameter $\bp = 0.2$. For  SGD, we employ the standard implementation which involves generating a sequence of points $\{x_t\}_{t = 1}^{\infty}$ according to the update rule $x_{t + 1} = \mathrm{proj}_{\mathcal{X}}(x_t - \eta_t G(x_t, \xi_t))$. Here, $\eta_t$ is the sequence of step sizes that are generally chosen depending on the knowledge about the function and its corresponding parameters. The initial point $x_1$ is chosen uniformly at random in $\mathcal{X}$.


\begin{figure}
    \centering
    \includegraphics[scale= 0.4]{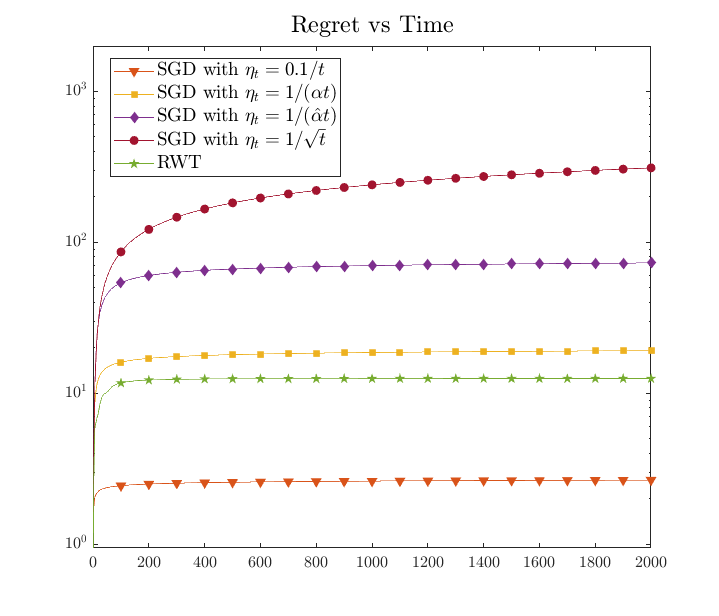}
    \caption{Performance comparison of RWT and SGD ($f(x)= 4|x - 0.2|^{1.2}$; results based on $1000$ Monte Carlo runs).}
    \label{Fig:sensitivity_SGD}
\end{figure}

In Figure \ref{Fig:sensitivity_SGD}, we compare the regret performance of RWT with that of SGD using four sets of step sizes chosen based on different levels of knowledge about the objective function $f(x)$: (i) the numerically optimized step size $\eta_t = 0.1/t$ obtained through numerical search; (ii) the commonly adopted order-optimal step size $\eta_t = 1/(\alpha t)$ chosen  with the knowledge of $f(x)$ being strongly convex and the exact value of the strong-convexity parameter $\alpha$; (iii) the order-optimal step size $\eta_t = 1/ (\hat{\alpha} t)$ chosen based on a lower bound $\hat{\alpha} = \alpha/4$ of the strong-convexity parameter; (iv) the commonly adopted order-optimal step size $\eta_t = 1/\sqrt{t}$ for general convex functions (i.e., without the knowledge of $f(x)$ being strong convex). Figure \ref{Fig:sensitivity_SGD} clearly demonstrates the sensitivity of SGD to the choice of step sizes. Imperfect knowledge on the function characteristics and/or the associated parameter results in orders of magnitude performance degradation. Except with the numerically optimized step size, which is infeasible in practice, SGD yielded inferior performance to RWT which was run without any parameter tuning. 

\begin{figure}
    \centering
    \includegraphics[scale = 0.45]{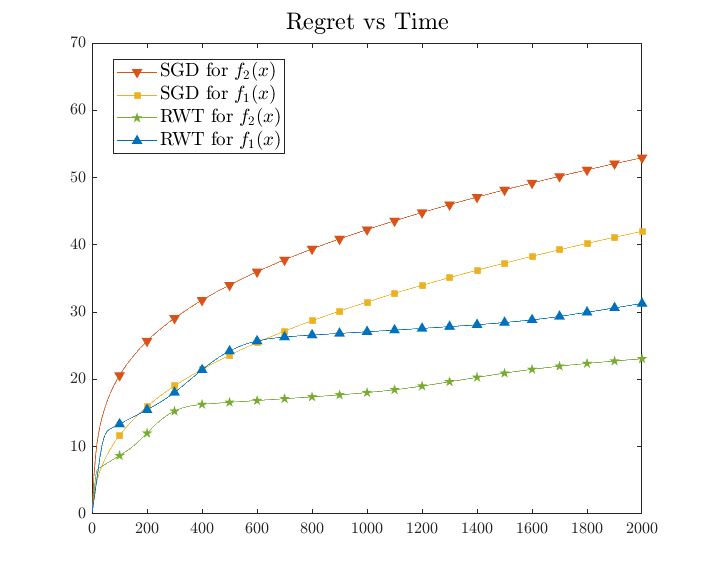}
    \caption{Adaptivity of RWT (results based on $1000$ Monte Carlo runs).}
    \label{Fig:adaptivity}
\end{figure}

In Figure~\ref{Fig:adaptivity}, we demonstrate the adaptivity of RWT that automatically takes advantage of well-behaving objective functions. In this setup, we first consider an objective function $f_1(x)$ that is not strongly convex:
\[
f_1(x)=  3|x - 0.2|^{1.6} - 1.5744|x - 0.2|^2.
\]
SGD was run with an order-optimal step size $\eta_t = 1/\sqrt{t}$ for convex functions. RWT was run with $\bp = 0.2$ as in the previous example. We then consider a strongly convex function $f_2(x)=3|x - 0.2|^{1.6}$ that retains only the first term of $f_1(x)$. Both SGD and RWT were run without the knowledge about this change in the objective function, hence with the same choices of step size and $\bp$. Figure~\ref{Fig:adaptivity} shows that RWT adapted to the unknown change in the function characteristic and offered improved performance under the strongly convex $f_2(x)$. In contrast, the performance of SGD degraded under a better behaving objective function due to the mismatch of the step size.




\subsection{Local Data Caching}

In certain applications, it may be possible to observe the gradient $G(x, \xi)$ at multiple input values of $x$ in addition to the chosen action $x_t$ which determines the regret. Such a feedback model is often referred to as the semi-bandit feedback. It is natural that these additional observations can speed up the learning process and reduce the cumulative regret. Obtaining and storing such side observations, however, come with costs in terms of computation and storage. Consider, for example, the application of online classification of a real-time stream of random instances as discussed in Sec.~\ref{sec:intro-sco}. While the random loss/gradient can be computed for all possible classifiers $x\in\mathcal{X}$ for each instance $\xi_t$, computation and storage constraints may limit such side observations to a few strategically chosen input values in $\mathcal{X}$ that best assist future learning. 

Due to the highly structured mobility of RWT in the input space $\mathcal{X}$, the input values that will be chosen in the few future steps are known to be from a small set, that is, the few neighbors of the current node on the binary tree. As a result, RWT allows storage-efficient caching of side observations from future query points. 

We consider here an intuitive local data caching scheme based on a priority queue. Specifically, after each move of the random walk, a priority index is assigned to the neighbors of the current location of the walk, where the index is determined in inverse proportion to the distance (in terms of the number of hops) on the tree. Observations are then drawn from the query points associated with the nodes in the priority queue, starting from the head of the queue up to what is allowed by the cache size. Specifically, given a cache size $c$, $c$ sequential tests are run in parallel at each time with input values chosen based on the order given by the priority queue. Note that the three input values associated with the current location of the walk are at the head of the queue (they are of distance $0$), and the shared input values across neighboring nodes are not repeated in the queue. Once a local test terminates, the binary output is noted, and the next input value in the queue is queried. Once the random walk moves to a new location, the priority queue is updated, and the process repeats. 

\begin{figure}
    \centering
    \includegraphics[scale = 0.45]{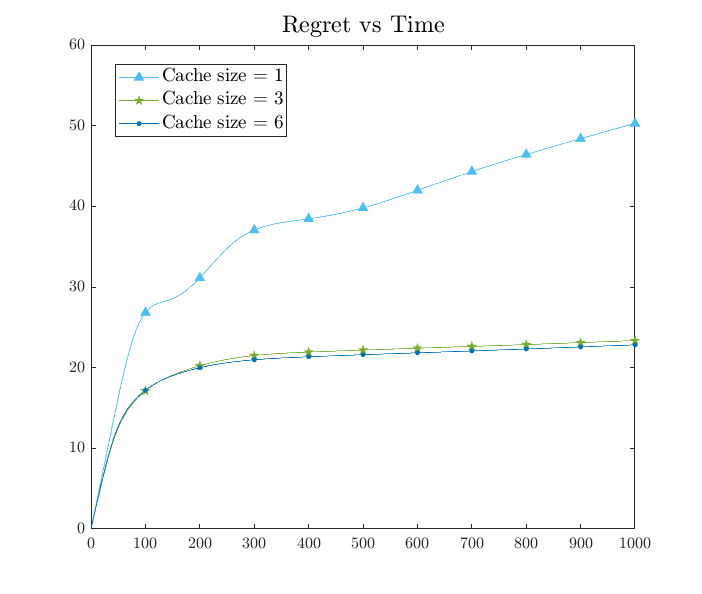}
    \caption{Local data caching to speed up learning ($f(x) = |x - 0.05|^{1.4}$;results based on $1000$ Monte Carlo runs).}
    \label{Fig:Caching}
\end{figure}

Shown in Figure \ref{Fig:Caching} is the regret performance of RWT implemented with cache sizes of $1$, $3$, and $6$. A cache size of $1$ gives the original implementation. We observe that with a small cache size of $3$, over $50\%$ of reduction in regret can be achieved. The gain with an increased cache size, however, is minimal. A more refined usage of cache may be necessary to fully exploit the increased storage capacity.

\section{Conclusion}

We gave a relatively complete regret analysis of the Random-Walk-on-a-Tree (RWT) policy for  stochastic convex optimization under various function and noise characteristics. Comparing with the popular SGD approach which requires careful tuning of the step-sizes based on prior knowledge of the function characteristics, RWT, with no tuning parameters, self adapts to unknown function characteristics and offers better or matching regret orders as SGD. The adaptivity is achieved via a local sequential test with termination thresholds designed based on the law of the iterated logarithm. The highly structured random walk also enables storage-efficient local data caching for noise reduction at future query points. We further established (near-)optimal regret orders for RWT under heavy-tailed noise with unbounded variance.Our ongoing work on extending RWT to high-dimensional problems by integrating it with coordinate minimization has shown promising results.


\newpage

\newpage
\onecolumn

\section*{Appendix A}

\begin{proof}[Proof of Lemma~\ref{Lemma2}]
We define the value $W_n$ of each step $n$ of the random walk as $W_n = 1$ if the random walk moves in the right direction, i.e., the random walk moves to the child who contains $x^*$ or moves to the parent if neither of the children contain $x^*$; and $W_n = -1$ otherwise. We also use $V_n=\sum_{m=1}^n W_m$ to denote the cumulative value of the steps. 

The condition for $W_n=1$ is that the result of all three local sequential tests at step $n$ are true. Thus, as a result of Lemma~\ref{SCofP}, we have $W_n=1$ with probability $p\ge(1-\alpha)^3>\frac{1}{2}$ which indicates $\E[W_n]\ge 2p-1>0$. The positive expected value of each step in the random walk indicates the random walk is more likely to move closer to $x^*$ rather than away from it. In particular, 
\begin{eqnarray}\nn
&&\hspace{-2em}\Pr\left[V_n\le \frac{n(2p-1)}{2}\right]\\\nn &=& \Pr\left[V_n-n(2p-1) \le -\frac{n(2p-1)}{2}\right]\\\nn
&\le& \Pr\left[V_n-\E[V_n]\le -\frac{n(2p-1)}{2}\right]\\\nn
&\le& \Pr\left[\frac{V_n}{n}-\frac{\E[V_n]}{n}\le -\frac{(2p-1)}{2}\right]\\\nn
&\le& \exp\left(-\frac{n(2p-1)^2}{2}\right). 
\end{eqnarray}

The last inequality is based on Hoeffding inequality on independent Bernoulli random variables $W_n$. 
Each time the random walk moves one step in the right direction the value of $\Delta_{x_{(n)}}$ in the local sequential tests is divided by half. For example when $V_n=0$ we have, trivially, $\Delta_{x_{(m)}}\le1$. If the random walk moves to the child who contains $x^*$ then we have $\Delta_{x_{(n)}}\le0.5$, and so on. Thus, for
$V_n> \frac{n(2p-1)}{2}$, we have
\begin{eqnarray}\nn
\Delta_{(n)}&\le& 2^{-V_n}\\\nn
&\le& 2^{-\frac{n(2p-1)}{2}},
\end{eqnarray}
which completes the proof. 

\end{proof}

\section*{Appendix B}\label{AppendixB}

\begin{proof}[Proof of Lemma~\ref{SCofP}]
The samples obtained at any query point are sub-Gaussian random variables with mean $g(x)$. We make the notation concise by using $\mu = g(x)$.
Let $\{X_i\}_{i\in \mathbb{N}}$ be a set of iid sub-Gaussian random variables with mean $\mu$ with sub-Gaussian parameter given by $\sigma^2$. Define $\displaystyle Z_n = \sum_{i = 1}^n X_i$ and $Y_n = Z_n - n\mu$. The threshold is given as $\displaystyle \tau'(s, \beta) = \sqrt{\frac{5 \sigma^2}{ s} \log \left( \frac{\log s}{\beta}\right)}$ where $\beta = \dfrac{\sqrt{\bp}}{6}$. The main idea of the proof follows arguments similar to the proof of Law of Iterated Logarithm.

Note that the sequence $Y_n$ forms a martingale and hence for any $t \in \R$, $e^{tY_n}$ forms a submartingale. Using the Doob's inequality for submartingales, we can write 
$$ P\left( \max_{1 \leq j \leq n} e^{tY_j} \geq e^{t \nu} \right) \leq \E \left[ e^{tY_n} \right] e^{-t \nu} \leq e^{\frac{n t^2 \sigma^2}{2} - t \nu}\leq  e^{ - \frac{ \nu^2}{2 n\sigma^2}}.$$
The second step uses the sub-Gaussianity of $Y_n$ and the last step uses $t = \frac{ \nu}{2 n \sigma^2}$. Hence, we can conclude that $\displaystyle P\left( \max_{1 \leq j \leq n} Y_j \geq \nu \right) \leq  e^{ - \frac{ \nu^2}{2n \sigma^2}} $.

We assume $\mu < 0$. The proof for the case $\mu > 0 $ is similar. Let $\theta = 1.25$ and $\mathbb{P}(\text{err})$ denote the probability of error in the local test. Hence, we have,
\begin{align*}
\mathbb{P}(\text{err}) & =  \Pr \left( \bigcup_{k} \left( \left\{\frac{Z_{k'}}{k'} > - \tau'(k', \beta) \  \forall \ k' < k \right\} \cap \left\{\frac{Z_{k}}{k} > \tau'(k, \beta) \right\} \right) \right) \\ 
& \leq  \Pr \left( \bigcup_{k} \left( \frac{Z_{k}}{k} > \tau'(k, \beta)  \right)  \right) \\ 
& \leq  \Pr \left( \bigcup_{k} \left( \frac{Z_{k}}{k} - \mu > \tau'(k, \beta)  - \mu \right) \right) \\ 
& \leq  \Pr \left( \bigcup_{k} \left( \frac{Y_k}{k} > \sqrt{\frac{5 \sigma^2}{k} \log \left( \frac{\log k}{\beta}\right)} \right) \right) \\ 
& \leq  \Pr \left( \bigcup_{m \in \N} \bigcup_{k = \theta^{m-1}}^{\theta^m} \left( Y_k > k \sqrt{\frac{5 \sigma^2}{k} \log \left( \frac{\log k}{\beta}\right)} \right) \right) \\ 
& \leq \sum_{m = 1}^{\infty} \Pr \left( \bigcup_{k = \theta^{m-1}}^{\theta^m} \left( Y_k > k \sqrt{\frac{5 \sigma^2}{k} \log \left( \frac{\log k}{\beta}\right)} \right) \right).
\end{align*}
Note that the above expression on the RHS is well defined only for $k \geq 3$. We consider the termination only after at least $3$ samples are taken. With $\displaystyle m_0 = \left\lfloor \frac{\log 3}{\log 1.25} \right\rfloor + 1 = 5$
\begin{align*}
\mathbb{P}(\text{err}) & \leq  \sum_{m = m_0}^{\infty} \Pr \left( \bigcup_{k = \theta^{m-1}}^{\theta^m} \left( Y_k > \sqrt{5 \sigma^2 k \log \left( \frac{\log k}{\beta}\right)} \right) \right) \\ 
& \leq \sum_{m = m_0}^{\infty} \Pr \left( \bigcup_{k = \theta^{m-1}}^{\theta^m} \left( Y_k > \sqrt{{5 \sigma^2 }{\theta^{m-1}} \log \left( \frac{\log \theta^{m-1}}{\beta}\right)} \right) \right) \\ 
& \leq  \sum_{m = m_0}^{\infty} \Pr \left( \max_{k \leq \theta^m} Y_k > \sqrt{{5 \sigma^2}{\theta^{m-1}} \log \left( \frac{\log \theta^{m-1}}{\beta}\right)}  \right) \\
& \leq  \sum_{m = m_0}^{\infty} \exp \left( - \frac{\theta^{m-1}}{2 \sigma^2} \frac{5 \sigma^2}{\theta^{m}} \log \left( \frac{\log \theta^{m-1}}{\beta}\right) \right) \\
& \leq  \sum_{m = m_0}^{\infty} \exp \left( -   \frac{2.5}{\theta} \log \left( \frac{(m-1)\log (\theta)}{\beta} \right) \right) \\
& \leq  \sum_{m = m_0}^{\infty} \left( \frac{\beta}{(m-1) \log(1.25)} \right)^{2} \\
& \leq  \left(\frac{\sqrt{\bp}}{6\log(1.25)}\right)^2 \sum_{m = 1}^{\infty} \frac{1}{m^2} \\
& \leq  \bp \left(\frac{1}{6\log(1.25)}\right)^2 \frac{\pi^2}{6} \\
& \leq  \bp 
\end{align*}
where the second step uses that $\displaystyle k \log \left( \frac{\log k}{\beta}\right) $ is increasing in $k$.

We now prove the upper bound on the number of samples required in the local test. Define $\displaystyle s_0 =  \frac{20 \sigma^2}{\mu^2} \log \left( \frac{2}{\beta} \log \left( \frac{40 \sigma^2}{\beta \mu^2} \right)\right) $ and $s_1 = \lceil s_0 \rceil$ where $\displaystyle \beta = \frac{\sqrt{ \bp}}{6}$.  We have,
\begin{align}
    \tau'(s_1, \beta) & = \sqrt{\frac{5 \sigma^2}{s_1} \log \left( \frac{\log s_1}{\beta}\right)}\nonumber \\
    & \leq \sqrt{\frac{5 \sigma^2}{s_0} \log \left( \frac{\log s_0}{\beta}\right)} \nonumber\\
    & \leq \sqrt{\frac{5 \sigma^2}{s_0} \log \left( \frac{1}{\beta} \log \left( \frac{20 \sigma^2}{\mu^2} \log \left( \frac{2}{\beta} \log \left( \frac{40 \sigma^2}{\beta \mu^2} \right)\right) \right) \right)} \nonumber\\
    & \leq \sqrt{\frac{5 \sigma^2}{s_0} \log \left( \frac{1}{\beta} \log \left( \frac{20 \sigma^2}{\mu^2} \right) + \frac{1}{\beta} \log \left( \log \left( \frac{2}{\beta} \log \left( \frac{40 \sigma^2}{\beta \mu^2} \right)\right) \right) \right)} \nonumber\\
    & \leq \sqrt{\frac{5 \sigma^2}{s_0} \log \left( \frac{1}{\beta} \log \left( \frac{20 \sigma^2}{\mu^2} \right) + \frac{1}{\beta} \log \left( \frac{80 \sigma^2}{\beta^2 \mu^2}\right) \right)} \nonumber\\
    & \leq \sqrt{\frac{\mu^2}{4} \frac{\log \left( \frac{1}{\beta} \log \left( \frac{1600 \sigma^4}{\beta^2 \mu^4} \right) \right)}{\log \left( \frac{2}{\beta} \log \left( \frac{40 \sigma^2}{\beta \mu^2} \right)\right)}} \nonumber\\
    &\leq \frac{\mu}{2} \label{mu2om}
\end{align}
where the second step follows from the fact that $\tau'(s, \beta) $ is decreasing for all $s \geq 2$.
Let $\tau$ denote the random number of samples taken before the local test terminates. Then, for $n\ge s_1$,
\begin{eqnarray}\nn
&&\hspace{-3em}\Pr[\tau> n]\\\nn
&\le&\Pr\bigg[\forall s\le n:\overline{Z}_{s}+\sqrt{\frac{5 \sigma^2}{s} \log \left( \frac{6 \log s}{\sqrt{ \bp}}\right)}>0,~\text{and}~
\overline{Z}_{s}-\sqrt{\frac{5 \sigma^2}{s} \log \left( \frac{6 \log s}{\sqrt{6 \bp}}\right)}<0\bigg ]\\\nn 
&\le&\Pr\bigg[\forall s \le n:\overline{Z}_{s}-\sqrt{\frac{5 \sigma^2}{s} \log \left( \frac{6 \log s}{\sqrt{\bp}}\right)}<0\bigg ]\\
&\le&\Pr\bigg[\forall s \le n:\overline{Z}_{s}-\sqrt{\frac{5 \sigma^2}{s} \log \left( \frac{\log s}{\beta}\right)}<0\bigg ]\\ \label{XSS}
&\le&\Pr\bigg[\overline{Z}_{n}-\mu<-\frac{\mu}{2}\bigg]\\ \label{XSS1}
&\le&\exp(- n \mu^2/(8 \sigma^2) ).
\end{eqnarray}
The inequality~\eqref{XSS} holds based on~\eqref{mu2om}, for $n\ge s_1$, and the inequality~\eqref{XSS1} holds based on the Chernoff-Hoeffding bound. We can write $\mathbb{E}[\tau]$ in terms of the sum of the tail probabilities $\Pr[\tau\ge n]$ as
\begin{eqnarray}\nn
\mathbb{E}[\tau]&=&\sum_{n= 0}^\infty\Pr[\tau\ge n]\\\nn
&=&s_1+\sum_{n= s_1+1}^\infty\Pr[\tau\ge n]\\\nn
&\le&s_1+\sum_{n= s_1+1}^\infty\exp(- n \mu^2/(8 \sigma^2) )\\\nn
&\le&s_1 +\frac{8 \sigma^2}{\mu^2}\exp(- s_1\mu^2/(8 \sigma^2))\\\nn
&\le&s_1 +\frac{8 \sigma^2}{\mu^2}\\\nn
&\le&2s_1,
\end{eqnarray}
which gives that $\displaystyle \mathbb{E}[\tau] \leq \frac{40 \sigma^2}{\mu^2} \log \left( \frac{12}{ \sqrt{ \bp}} \log \left( \frac{240 \sigma^2}{\sqrt{ \bp} \mu^2} \right)\right) + 2 $. Since similar analysis can be carried out for $\mu < 0$, we can conclude that for any point $x \in \mathcal{X}$, we have $\displaystyle \mathbb{E}[\tau(x)] \leq \frac{40 \sigma^2}{|g(x)|^2} \log \left( \frac{12}{ \sqrt{ \bp}} \log \left( \frac{240 \sigma^2}{\sqrt{ \bp} |g(x)|^2} \right)\right) + 2 $.

\end{proof}

\section*{Appendix C}
\begin{proof}[Proof of Lemma~\ref{heavy_tailed_test}]

The proof is similar to that of Lemma~\ref{Lemma2}. Let $\mu = g(x)$.  
Let $\{X_i\}_{i \in \N}$ be a set of i.i.d. random variables with mean $\mu$ and $b^{\text{th}}$ raw moment ($b \in (1, 2)$) bounded by some $u > 0$. Let $S_n$ denote the truncated mean given as $\displaystyle S_n = \sum_{t = 1}^n X_t \1_{\{|X_t| \leq B_t\}}$ where $\displaystyle B_t =  B_0\left( \frac{t}{\Log(t)}\right)^{\frac{1}{b}}$ and $\displaystyle \Log(x) = 10^b \log \left( \frac{ \max\{ \log(x), 2 \}}{\beta} \right)$ with $\displaystyle \beta = \frac{b\sqrt{\bp}}{12}$.  The constant $B_0$ is given as $\displaystyle \max \left\{ \left( \frac{2^{\frac{2 + b}{b}}}{\Log(1)^{\frac{2-b}{b}}} \frac{15u}{3 - \sqrt{2}} \right)^{\frac{1}{b}}, \left( \frac{4 \sqrt{2}u \log 2}{ \sqrt{ \log \left(\log 3\right) } } \right)^{\frac{1}{b}}, {\left(2\sqrt{2} bu 10^{\frac{b}{2}} \right)} \right\}$. The threshold used in the local test is given as $\displaystyle \tau'(s, \beta) = \sqrt{\frac{B_0^2}{2}s^{\frac{2-2b}{b}} \log \left( \frac{ \log s}{\beta}\right)} {+} \frac{1}{s} \sum_{t = 1}^{s} \frac{u}{B_t^{b-1}}$.
%
Now consider,
\begin{align*}
	S_n - n \E[X] & = \sum_{t = 1}^n X_t \1_{\{|X_t| \leq B_t\}} - n \E[X] \\
	& = \sum_{t = 1}^n X_t \1_{\{|X_t| \leq B_t\}} - \sum_{t = 1}^n \E [X_t \1_{\{|X_t| \leq B_t\}}] + \sum_{t = 1}^n \E[X_t \1_{\{|X_t| \leq B_t\}}] - n \E[X] \\
	& = \sum_{t = 1}^n \left( X_t \1_{\{|X_t| \leq B_t\}} -  \E [X_t \1_{\{|X_t| \leq B_t\}}] \right) + \sum_{t = 1}^n \left( \E[X_t \1_{\{|X_t| \leq B_t\}}] -  \E[X] \right) \\
	& = \sum_{t = 1}^n \left( X_t \1_{\{|X_t| \leq B_t\}} -  \E [X_t \1_{\{|X_t| \leq B_t\}}] \right) - \sum_{t = 1}^n \E[X \1_{\{|X| > B_t\}}]  
\end{align*}
Let $\{\phi_t\}_{t=1}^\infty$ be the sequence of random variables given by $X_t \1_{\{|X_t| \leq B_t\}} -  \E [X_t \1_{\{|X_t| \leq B_t\}}]$. The random variable $\phi_t$ has zero mean finite variance for any $t$ (since it is bounded). Furthermore, $\displaystyle \E[\phi_t^2] = \E[ (X_t \1_{\{|X_t| \leq B_t\}} -  \E [X_t \1_{\{|X_t| \leq B_t\}}])^2] \leq \E[X_t^2 \1_{\{|X_t| \leq B_t\}}] \leq u B_t^{2-b}$ and $|\phi_t| \leq B_t + |\E [X_t \1_{\{|X_t| \leq B_t\}}]| \leq 2B_t$.

Thus, the sum $\displaystyle \psi_n = \sum_{t = 1}^n \phi_t$ forms a martingale where the difference sequence is bounded and the increments are independent. The Freedman Inequality implies that for any $\nu > 0 $
\begin{align}
	P\left( \bigcup_{j = 1}^n \left( \psi_j \geq \nu \right) \right) &  \leq \exp \left(-\frac{\nu^2}{2(2B_n v/3 + \sum_{t = 1}^n\E[\phi_t^2] )} \right) \nonumber\\
	&  \leq \exp \left(-\frac{\nu^2}{2(2B_n v/3 + \sum_{t = 1}^n u B_t^{2 - b} )} \right) \nonumber\\\label{Freedman_bound}
	&  \leq \exp \left(-\frac{\nu^2}{2(2B_n v/3 +  u\left(\frac{B_0^b}{\log (2/\beta)}\right)^{\frac{2 -b}{b}} (n + 1)^{2/b} )} \right) 
\end{align}

where the last step follows from the relation that $\displaystyle \sum_{t = 1}^n \left(\frac{t}{\Log(t)}\right)^{\frac{2 - b}{b}} \leq \sum_{t = 1}^n \left(\frac{t}{\Log(1)}\right)^{\frac{2 - b}{b}} \leq 
\left(\frac{1}{\Log(1)}\right)^{\frac{2 - b}{b}} (n + 1)^{2/b}$. 

Also note that $\displaystyle \frac{1}{s} \sum_{t = 1}^s \frac{u}{B_t^{b-1}} \geq {\bigg|}\frac{1}{s} \sum_{t = 1}^s \E[X \1_{\{|X| > B_t\}}] {\bigg|} $. This follows from the result obtained in \cite{Bubeck13}.

We assume $\mu < 0$. The proof is similar for the case $\mu>0$. Let $\theta = (1.25)^{\frac{b}{2}}$ and $\mathbb{P}(\text{err})$ denote the probability of error in the local test. We have 
\begin{align*}
\mathbb{P}(\text{err}) & =  \Pr \left( \bigcup_{k} \left( \left\{\frac{S_{k'}}{k'} > - \tau'(k' , \beta) \  \forall \ k' < k \right\} \cap \left\{\frac{S_{k}}{k} > \tau'(k', \beta) \right\} \right) \right) \\ 
& \leq  \Pr \left( \bigcup_{k} \left( \frac{S_{k}}{k} > \tau'(k, \beta)  \right)  \right) \\ 
& \leq  \Pr \left( \bigcup_{k} \left( \frac{S_{k}}{k} - \mu > \tau'(k, \beta)  - \mu \right) \right) \\ 
& \leq  \Pr \left( \bigcup_{k} \left( \frac{\psi_k}{k} - \frac{1}{k}\sum_{t = 1}^k \E[X \1_{\{|X| > B_t\}}] > \tau'(k, \beta)  - \mu \right) \right) \\
& \leq  \Pr \left( \bigcup_{k} \left( \frac{\psi_k}{k} > \tau'(k, \epsilon, \beta)  + \frac{1}{k}\sum_{t = 1}^k \E[X \1_{\{|X| > B_t\}}] \right) \right) \\
& \leq  \Pr \left( \bigcup_{k} \left( \frac{\psi_k}{k} > \sqrt{\frac{B_0^2}{2}k^{\frac{2-2b}{b}} \log \left( \frac{\log k}{\beta}\right)}+ \frac{1}{s} \sum_{t = 1}^{s} \frac{u}{B_t^{b-1}}  + \frac{1}{k}\sum_{t = 1}^k \E[X \1_{\{|X| > B_t\}}] \right) \right) \\
& \leq  \Pr \left( \bigcup_{k} \left( \frac{\psi_k}{k} > \sqrt{\frac{B_0^2}{2}k^{\frac{2-2b}{b}} \log \left( \frac{\log k}{\beta}\right)} \right) \right) \\ 
& \leq  \Pr \left( \bigcup_{m \in \N} \bigcup_{k = \theta^{m-1}}^{\theta^m} \left( \psi_k > k \sqrt{\frac{B_0^2}{2}k^{\frac{2-2b}{b}} \log \left( \frac{\log k}{\beta}\right)} \right) \right) \\ 
& \leq \sum_{m = 1}^{\infty} P \left( \bigcup_{k = \theta^{m-1}}^{\theta^m} \left( \psi_k > k \sqrt{\frac{B_0^2}{2}k^{\frac{2-2b}{b}} \log \left( \frac{\log k}{\beta}\right)} \right) \right) 
\end{align*}
As before, we consider $k \geq 3$and with $\displaystyle m_0 = \left\lfloor \frac{2\log 3}{b\log (1.25)} \right\rfloor + 1 \geq 5$
\begin{align*}
\Pr(\text{err}) & \leq  \sum_{m = m_0}^{\infty} \Pr \left( \bigcup_{k = \theta^{m-1}}^{\theta^m} \left( \psi_k > \sqrt{ \frac{B_0^2}{2} k^{\frac{2}{b}} \log \left( \frac{\log k}{\beta}\right)} \right) \right) \\ 
& \leq \sum_{m = m_0}^{\infty} \Pr \left( \bigcup_{k = \theta^{m-1}}^{\theta^m} \left( \psi_k > \sqrt{\frac{B_0^2}{2} \theta^{2(m-1)/b} \log \left( \frac{\log \theta^{m-1}}{\beta}\right)} \right) \right) \\ 
& \leq \sum_{m = m_0}^{\infty} \Pr \left( \bigcup_{k = 1}^{\theta^m} \left( \psi_k > \sqrt{\frac{B_0^2}{2} \theta^{2(m-1)/b} \log \left( \frac{\log \theta^{m-1}}{\beta}\right)} \right) \right) 
\end{align*}
where the second step follows from $k^{2/b} \log \left( \frac{\log k}{\beta}\right) $ being increasing in $k$.
For the RHS, we can use the bound obtained in (\ref{Freedman_bound}). Let $\lambda(m , \theta)$ denote the value of bound in (\ref{Freedman_bound}) with $\displaystyle \nu = \sqrt{\frac{B_0^2}{2} \theta^{2(m-1)/b} \log \left( \frac{\log \theta^{m-1}}{\beta}\right)}$ and $N = \theta^m$. Thus, we have,
\begin{align*}
	\lambda(m, \theta) & = \exp \left( - \left( \frac{B_0^2\theta^{\frac{2(m-1)}{b}}}{4} \log \left( \frac{\log \theta^{m-1}}{\beta}\right) \right)\left(\frac{2B_N}{3} \sqrt{\frac{B_0^2\theta^{\frac{2(m-1)}{b}}}{2} \log \left( \frac{\log \theta^{m-1}}{\beta}\right)} +  u\left(\frac{B_0^b}{\Log(1)}\right)^{\frac{2 -b}{b}} (N + 1)^{\frac{2}{b}} \right)^{-1} \right)\\
	& \leq \exp \left( - \left( \frac{B_0^2\theta^{\frac{2(m-1)}{b}}}{4} \log \left( \frac{\log \theta^{m-1}}{\beta}\right) \right)\left(\frac{B_0^2}{3} \left(\frac{N}{\Log(N)}\right)^{\frac{1}{b}} \sqrt{2\theta^{\frac{2m}{b}} \log \left( \frac{\log \theta^{m}}{\beta}\right)} +  u\left(\frac{B_0^b}{\Log(1)}\right)^{\frac{2 -b}{b}} (2N)^{\frac{2}{b}} \right)^{-1} \right)\\
	& \leq \exp \left( - \left( \frac{B_0^2\theta^{\frac{2(m-1)}{b}}}{4} \log \left( \frac{\log \theta^{m-1}}{\beta}\right) \right)\left(\frac{\sqrt{2}B_0^2}{3} \frac{N^{\frac{2}{b}}}{10} +  u\left(\frac{B_0^b}{\Log (1)}\right)^{\frac{2 -b}{b}} (2N)^{\frac{2}{b}} \right)^{-1} \right)\\
	& \leq \exp \left( - \left( \frac{B_0^2\theta^{\frac{2(m-1)}{b}}}{4} \log \left( \frac{\log \theta^{m-1}}{\beta}\right) \right)\left(\frac{B_0^2}{10} N^{\frac{2}{b}} \right)^{-1} \right)
\end{align*}
where the third step uses the fact that $\displaystyle \frac{\sqrt{\log (\log x)}}{(\Log(x))^{1/b}} \leq \frac{1}{10}$ for all $x$ and the last step follows from the value of $B_0$. Thus,
\begin{align*}
\Pr(\text{err}) & \leq  \sum_{m = m_0}^{\infty} \exp \left( - \frac{10\theta^{\frac{2(m-1)}{b}} }{4 \theta^{2m/b}}  \log \left( \frac{\log \theta^{m-1}}{\beta}\right) \right) \\
& \leq  \sum_{m = m_0}^{\infty} \exp \left( -   \frac{2.5}{\theta^{\frac{2}{b}}} \log \left( \frac{(m-1)\log (\theta)}{\beta} \right) \right) \\
& \leq  \sum_{m = m_0}^{\infty} \left( \frac{2\beta}{b(m-1) \log(1.25)} \right)^{2} \\
& \leq  \left(\frac{\sqrt{\bp}}{6\log(1.25)}\right)^2 \sum_{m = 1}^{\infty} \frac{1}{m^2} \\
& \leq  \bp \left(\frac{1}{6\log(1.25)}\right)^2 \frac{\pi^2}{6} \\
& \leq  \bp 
\end{align*}

We now prove the upper bound on the number of samples required by the local test for the heavy tailed noise. The proof closely follows that of the finite variance case. Define $\displaystyle s_0 =  (2B_0^2)^{\frac{b}{2(b-1)}} {\left(\frac{\mu}{2}\right)^{\frac{b}{1-b}}} \log^{\frac{b}{2(b-1)}} \left( \frac{3b}{2(b - 1)\beta} \log \left( \sqrt{\frac{{12} B_0^2 b }{\mu^2 (b -1) \beta }}\right)\right)$ and $s_1 = \lceil s_0 \rceil$ with $\displaystyle \beta = \frac{b\sqrt{\bp}}{12}$ as before. Consider the first term of the threshold given as
\begin{align*}
	\hat{\tau}(s_0, \beta) & = \sqrt{\frac{B_0^2}{2}s_1^{\frac{2-2b}{b}} \log \left( \frac{\log s_1}{\beta}\right)} \\
	& \leq \sqrt{\frac{B_0^2}{2}s_0^{\frac{2-2b}{b}} \log \left( \frac{\log s_0}{\beta}\right)} \\
	& \leq \frac{B_0}{\sqrt{2}}s_0^{\frac{1-b}{b}} \sqrt{\log \left( \frac{\log s_0}{\beta}\right)}  \\
	& \leq \frac{B_0}{\sqrt{2}}s_0^{\frac{1-b}{b}} \sqrt{\log \left( \frac{1}{\beta} \log \left( (2B_0^2)^{\frac{b}{2(b-1)}} {\left(\frac{\mu}{2}\right)^{\frac{b}{1-b}}} \log^{\frac{b}{2(b-1)}} \left( \frac{3b}{2(b - 1)\beta} \log \left( \sqrt{\frac{{12}B_0^2 b}{\mu^2 (b -1) \beta }} \right)\right) \right)\right)} \\
	& \leq \frac{B_0}{\sqrt{2}}s_0^{\frac{1-b}{b}} \sqrt{\log \left( \frac{b}{2(b-1)\beta} \log \left( \frac{{8}B_0^2}{\mu^2} \right) + \frac{b}{2(b-1)\beta} \log \left( \log \left( \frac{3b}{2(b - 1)\beta} \log \left( \sqrt{\frac{{12}B_0^2b}{\mu^2( b- 1) \beta}} \right)\right) \right)\right)}  \\
	& \leq \frac{B_0}{\sqrt{2}}s_0^{\frac{1-b}{b}} \sqrt{\log \left( \frac{b}{2(b-1)\beta} \log \left( \frac{{8}B_0^2}{\mu^2} \right) + \frac{b}{2(b-1)\beta} \log \left(  \frac{3b}{2(b - 1)\beta}  \sqrt{\frac{{12}B_0^2b}{\mu^2( b- 1) \beta}}  \right)\right)}  \\
	& \leq \frac{B_0}{\sqrt{2}}\frac{\mu}{{2}\sqrt{2}B_0}  \left( \log \left( \frac{3{b}}{2({b} - 1)\beta} \log \left( \sqrt{\frac{{12}B_0^2b}{\mu^2( b- 1) \beta}} \right)\right) \right)^{-\frac{1}{2}} \sqrt{\log \left( \frac{b}{2(b-1)\beta}  \log \left( \ \sqrt{\frac{{1728}b^3 B_0^6}{\mu^6(b - 1)^3\beta^3}} \right)  \right)}  \\
	& \leq \frac{\mu}{{4}} \label{mu2omHT}
\end{align*}

Now, we have to similarly consider the second term of the threshold, given as 
\begin{align*}
    \frac{1}{s_1}\sum_{t =1}^{s_1} \frac{u}{B_t^{b-1}} & \leq \frac{u}{s_1 B_0^{b-1}} \sum_{t =1}^{s_1} \left(\frac{\Log(t)}{t} \right)^{\frac{b-1}{b}} \\
    & \leq \frac{u}{s_1 B_0^{b-1}} (\Log(s_1))^{\frac{b-1}{b}} \sum_{t =1}^{s_1} t^{\frac{1-b}{b}} \\
    & \leq \frac{bu}{ B_0^{b-1}} (\Log(s_1))^{\frac{b-1}{b}} s_1^{\frac{1-b}{b}} \\
    & \leq \frac{bu}{ B_0^{b-1}} \left(\frac{\Log(s_1)}{s_1} \right)^{\frac{b-1}{b}} \\
    & \leq \frac{bu}{ B_0^{b-1}} \left(\frac{\Log(s_0)}{s_0} \right)^{\frac{b-1}{b}} \\
    & \leq \frac{bu}{ B_0^{b-1}} s_0^{\frac{1-b}{b}} \sqrt{\Log(s_0) } \\
    & \leq \frac{bu10^{b/2}}{ B_0^{b-1}} s_0^{\frac{1-b}{b}} \sqrt{\log \left( \frac{\log s_0}{\beta}\right)}\\
    & \leq \frac{B_0}{2\sqrt{2}} s_0^{\frac{1-b}{b}} \sqrt{\log \left( \frac{\log s_0}{\beta}\right)}\\
    & \leq \frac{\mu}{8}
\end{align*}
Here the sixth step uses the fact that $\Log(s) \geq 1$ for all $s \geq 0$ and the last but one step follows from bound on the value of $B_0$ and by ensuring $s_0 \geq 8$. The last step follows from the analysis in the previous part.

Now, if $\tau$ denotes the random number of samples taken in the local test before termination, then for $n\ge s_1$,
\begin{eqnarray}\nn
&&\hspace{-3em}\Pr[\tau> n]\\\nn
&\le&\Pr\bigg[\forall s\le n:\overline{Z}_{s} + \sqrt{\frac{B_0^2}{2}s^{\frac{2-2b}{b}} \log \left( \frac{12 \log s}{b \sqrt{ \bp}}\right)} {+} \frac{1}{s} \sum_{t = 1}^{s} \frac{u}{B_t^{b-1}} >0,~\text{and}~ \\ \nn
& & \ \ \ \ \overline{Z}_{s}- \sqrt{\frac{B_0^2}{2}s^{\frac{2-2b}{b}} \log \left( \frac{12 \log s}{b  \sqrt{ \bp}}\right)} {-} \frac{1}{s} \sum_{t = 1}^{s} \frac{u}{B_t^{b-1}} < 0\bigg ]\\\nn 
&\le&\Pr\bigg[\forall s \le n:\overline{Z}_{s} - \mu - \sqrt{\frac{B_0^2}{2}s^{\frac{2-2b}{b}} \log \left( \frac{12 \log s}{b\sqrt{ \bp}}\right)} {-} \frac{1}{s} \sum_{t = 1}^{s} \frac{u}{B_t^{b-1}} < - \mu \bigg ]\\ \nn
&\le& \Pr\bigg[\overline{Z}_{n} - \mu - \sqrt{\frac{B_0^2}{2}n^{\frac{2-2b}{b}} \log \left( \frac{12 \log n}{b\sqrt{ \bp}}\right)} {-} \frac{1}{n} \sum_{t = 1}^{n} \frac{u}{B_t^{b-1}} < - \mu \bigg ]\\ \nn
&\le& \Pr\bigg[\frac{\psi_n}{n} - \frac{1}{n}\sum_{t = 1}^n \E[X \1_{\{|X| > B_t\}}]  <  \sqrt{\frac{B_0^2}{2}n^{\frac{2-2b}{b}} \log \left( \frac{12 \log n}{b\sqrt{ \bp}}\right)} + \frac{1}{n} \sum_{t = 1}^{n} \frac{u}{B_t^{b-1}} - \mu \bigg ]\\ \nn
&\le& \Pr\bigg[\frac{\psi_n}{n}  <  \sqrt{\frac{B_0^2}{2}n^{\frac{2-2b}{b}} \log \left( \frac{12 \log n}{b\sqrt{ \bp}}\right)} + \frac{2}{n} \sum_{t = 1}^{n} \frac{u}{B_t^{b-1}} - \mu \bigg ]\\ \nn
&\le& \Pr\bigg[\frac{\psi_n}{n}  <  \frac{\mu}{4} + \frac{2\mu}{8} - \mu \bigg ]\\ \nn
&\le&\Pr\bigg[\psi_n  < -\frac{n \mu}{2}\bigg]\\ \nn 
&\le&\exp \left( -\frac{1}{2} \left(\frac{n \mu}{2} \right)^2 \left( \frac{2B_n}{3} \frac{n \mu}{2} + \sum_{t = 1}^n u B_t^{2-b} \right)^{-1}\right)\\ \nn 
&\le&\exp \left( -\frac{1}{2} \left(\frac{n \mu}{2} \right)^2 \left(  \frac{n \mu}{3} u B_0 \left( \frac{n}{\Log(n)}\right)^{\frac{1}{b}} + u B_0^{2 -b}\sum_{t = 1}^n  \left( \frac{t}{\Log(t)}\right)^{\frac{2-b}{b}} \right)^{-1}\right)\\ \nn 
&\le&\exp \left( -\frac{1}{2} \left(\frac{n \mu}{2} \right)^2 \left(  \frac{n \mu}{3} u B_0 \left( \frac{n}{\Log(n)}\right)^{\frac{1}{b}} + u n B_0^{2 -b}  \left( \frac{n}{\Log(n)}\right)^{\frac{2-b}{b}} \right)^{-1}\right)\\ \nn 
&\le&\exp \left( -\frac{1}{2} \left(\frac{n \mu}{2} \right)^2 \left(  nu B_0 \left( \frac{n}{\Log(n)}\right)^{\frac{1}{b}} \left\{\frac{ \mu}{3}  +  B_0^{1 -b}  \left( \frac{n}{\Log(n)}\right)^{\frac{1-b}{b}} \right\} \right)^{-1}\right)\\ \nn 
&\le&\exp \left( -\frac{1}{2} \left(\frac{n \mu}{2} \right)^2 \left(  nu B_0 \left( \frac{n}{\Log(n)}\right)^{\frac{1}{b}} \left\{\frac{ \mu}{3}  +  \frac{1}{B_0^{b-1}}  \left( \frac{\Log(s_0)}{s_0}\right)^{\frac{b-1}{b}} \right\} \right)^{-1}\right)\\ \nn 
&\le&exp \left( -\frac{1}{2} \left(\frac{n \mu}{2} \right)^2 \left(  n^{1 + \frac{1}{b}} u B_0  \left\{\frac{ \mu}{3}  +  \frac{\mu}{8 b u}\right\} \right)^{-1}\right)\\ \nn 
&\le&\exp \left(- \frac{\mu}{8 \hat{u}B_0} n^{1 - \frac{1}{b}} \right) 
\end{eqnarray}
where $\displaystyle \hat{u} = \frac{u}{3} + \frac{1}{8}$. We can write $\mathbb{E}[\tau]$ in terms of the sum of the tail probabilities $\Pr[\tau\ge n]$ as
\begin{align*}
    \mathbb{E}[\tau] & = \sum_{n= 0}^\infty\Pr[\tau\ge n]  \\
    & \leq s_1 + \sum_{n= s_1+1}^\infty\Pr[\tau\ge n]  \\ 
    & \leq s_1 + \sum_{n= s_1+1}^\infty \exp \left(- \frac{\mu}{8 \hat{u}B_0} n^{1 - \frac{1}{b}} \right)\\
    & \leq s_1 + \int_{s_1}^{\infty}\exp \left(- \frac{\mu}{8 \hat{u}B_0} x^{1 - \frac{1}{b}} \right) dx \\
    & \leq s_1 + \frac{b}{b-1}\int_{s_1^{(1 - 1/b)}}^{\infty} \exp \left(- \frac{\mu}{8 \hat{u} B_0} y \right) y^{\frac{1}{b-1}}  dy \\
    & \leq s_1 + \frac{b}{b-1} \left(\frac{8\hat{u}B_0}{\mu} \right)^{\frac{b}{b -1}} \int_{0}^{\infty} \exp \left(- z \right) z^{\frac{1}{b-1}}  dz \\
    & \leq s_1 + \frac{b}{b-1} \left(\frac{8\hat{u}B_0}{\mu} \right)^{\frac{b}{b -1}} \Gamma \left( \frac{b}{b -1} \right)\\
    & \leq \left(\Gamma \left(\frac{2b - 1}{b -1}\right) \left(\frac{\hat{u}}{B_0}\right)^{\frac{b}{b-1}} + 1\right) s_1
\end{align*}
where $\displaystyle \Gamma(z) = \int_0^{\infty} x^{z-1} e^{-z} dz$ is the Gamma function.
Plugging in the values of $s_1$ and $\beta$, we arrive at $\displaystyle \E[ \tau(x) ] \leq {\left(\Gamma \left(\frac{2b - 1}{b -1}\right) \left(\frac{{1}}{B_0}\left( \frac{u}{3} + \frac{1}{8} \right)\right)^{\frac{b}{b-1}} + 1\right) } \left(\left( \frac{{8}B_0^2}{|g(x)|^2} \log \left( \frac{18}{(b - 1)\sqrt{\bp}} \log \left( {\frac{{144}B_0^2 }{|g(x)|^2 (b - 1)\sqrt{\bp}  }} \right) \right) \right)^{\frac{b}{2(b-1)}}      + {8}\right) $.

\end{proof}

\section*{Appendix D}

\begin{proof}[Proof of Theorem 1]
Let $\tau_m$ denote the number of samples taken in the $m$th time that the local sequential test is carried out; $\tau_1$ samples are taken at point $x_{(1)}$, $\tau_2$ samples are taken at point $x_{(2)}$ and so on. 
Let $t_n$ denote the time at the end of the $n$th step of the random walk: $t_n=\sum_{m=1}^{3n}\tau_{m}$. Notice that both $\tau_m$ and $t_n$ are random variables were the randomness comes from the randomness in the samples of $g$. Define
$n_0=\max\{{n\in \mathbb{N}}:n\le \frac{3}{(2p-1)^2}\log t_n, t_n\le T\}$ and $t_0=t_{n_0}$. The definition of $n_0$ and $t_0$ indicates that at $t>t_0$, the random walk has taken more than $\frac{3}{(2p-1)^2}\log t$ steps. We analyze the regret incurred up to time $t_0$, and after that, separately.
\begin{eqnarray}\nn
\mathcal{R}_{\mbox{\footnotesize RWGD}}(T)
&=&\E \left[\sum_{t=1}^T \left(f(x_{\pi(t)}-f(x^*))\right) \right] \\\label{RegExp}
&=&\underbrace{\E \left[\sum_{t=1}^{t_0}\left(f(x_{\pi(t)}-f(x^*))\right)\right]}_{R_1}
+ \underbrace{\E\left[\sum_{t=t_0+1}^{T}\left(f(x_{\pi(t)}-f(x^*))\right) \right]}_{R_2}.
\end{eqnarray}
Next, we establish an upper bound on each term of the regret.

\emph{Upper bound on the first term $R_1$}.
From Lemma~\ref{SCofP}, we have that $\tau_m$ satisfies
\begin{eqnarray}
\mathbb{E}[\tau_m]\le
\frac{40 \sigma^2}{ g^2(x_m)} \log \left( \frac{12}{ \sqrt{ \bp}} \log \left( \frac{240 \sigma^2 }{\sqrt{ \bp}  g^2(x_m)} \right)\right) + 2.
\end{eqnarray}
Based on this upper bound on $\E[\tau_m]$, 

I. when $f$ is convex, we have
\begin{eqnarray}\nn
&&\hspace{-3em}R_1 \le \sum_{m=1}^{3n_0} \E[\tau_m] g(x_m)\Delta_{x_{(m)}}.~~~~~
\end{eqnarray}

II. when $f$ is strongly convex 
\begin{eqnarray}\nn
&&\hspace{-3em}R_1 \le \sum_{m=1}^{3n_0}\E[\tau_m]  \frac{g^2(x_m)}{2\alpha}.~~~~~
\end{eqnarray}

Noticing the constraint that $\sum_{m=1}^{3n_0}\E[\tau_m]=\E[t_0]\le T$ and using $\Delta_x\le 1$, the following constrained optimization problems gives us an upper bound on $R_1$.
{{\begin{eqnarray}\label{max1}
&&\hspace{-2em}\max_{\{g(x_{(m)}),\tau_m\}_{m=1}^{3n_0}} \sum_{m=1}^{3n_0} \bigg(
\E[\tau_m]g(x_{(m)})
\bigg), ~\text{when $f$ is convex}\\\label{max2}
&&\hspace{-2em}\max_{\{g(x_{(m)}),\tau_m\}_{m=1}^{3n_0}} \sum_{m=1}^{3n_0} \bigg(
\E[\tau_m]\frac{g^2(x_{(m)})}{2\alpha}
\bigg), ~\text{when $f$ is strongly convex}\\\nn
&&\hspace{-2em}\textit{subject to:} \sum_{m=1}^{3n_0}\E[\tau_m]\le T,~~~\text{and}~~\\\nn
&&\hspace{-2em}\mathbb{E}[\tau_m]\le
\frac{40 \sigma^2}{ g^2(x_m)} \log \left( \frac{12}{ \sqrt{ \bp}} \log \left( \frac{240 \sigma^2 }{\sqrt{ \bp}  g(x_m)^2} \right)\right) + 2~\text{for all}~m. 
\end{eqnarray}}}


Using standard Lagrangian method it can be shown that for the optimum values we have $\E[\tau_m]=\frac{T}{3n_0}$ (for all $m$). Using the upper bound on $\E[\tau_m]$ we have the following upper bound on $g^2(x_m)$



\begin{eqnarray}\label{gUB}
g^2(x_m) \le \frac{40 \sigma^2}{ \E[\tau_m]} \log \left( \frac{12}{ \sqrt{ \bp}} \log \left( \frac{6\E[\tau_m] }{\sqrt{ \bp}  } \right)\right) + \frac{2g^2_{\max}}{\E[\tau_m]}.
\end{eqnarray}

Substituting $g(x_m)$ from~\eqref{gUB} and $\E[\tau_m]=\frac{T}{3n_0}$ in the optimization problem
(together with $n_0 \le  \frac{3}{(2p-1)^2}\log T$) results in

I. when $f$ is convex

\begin{eqnarray}\label{case1}
R_1&\le& \sqrt{ \frac{360\sigma^2T\log T}{(2p-1)^2}
\log\left(\frac{12}{\sqrt{\bp}}\log\frac{2(2p-1)^2T}{\sqrt{\bp}3\log T}\right)}
+ \sqrt{\frac{18g^2_{\max}T\log T}{(2p-1)^2}}
.
\end{eqnarray}

I. when $f$ is strongly convex

\begin{eqnarray}\label{case2}
&&\hspace{-2.5em}{R}_{1}(T)\le
\frac{360\sigma^2\log T}{2\alpha(2p-1)^2}\log\left(\frac{12}{\sqrt{\bp}}\log\frac{2(2p-1)^2T}{3\log T\sqrt{\bp}}\right)+ \frac{18g^2_{\max}\log T}{2\alpha(2p-1)^2}
\end{eqnarray}

For the case where $f$ is non-differentiable at $x^*$ with a $\delta>0$ lower bound on gradient, from the upper bound on $\E[\tau]$, we have
\begin{eqnarray}\label{case3}
&&\hspace{-2em}{R}_{1}(T)\le
\frac{9g_{\footnotesize \mbox{max}}\log T}{(2p-1)^2}\bigg(\frac{40\sigma^2}{\delta^2}\log\left(\frac{12}{\sqrt{\bp}}\log\frac{240\sigma^2}{\sqrt{\bp}\delta^2}\right)+2\bigg)+ g_{\footnotesize \mbox{max}}(\log T+4). 
\end{eqnarray}

\emph{Upper bound on the second term $R_2$}.
At time $t: t_0<t\le T$, by definition of $t_0$ we know that the random walk has taken more than $\frac{3}{(2p-1)^2}\log t$ steps. 
From Lemma~\ref{Lemma2}, we have with probability at least$~1-\exp(-\frac{\frac{3}{(2p-1)^2}\log (t)(2p-1)^2}{2})=1-(\frac{1}{t})^{\frac{3}{2}}$, 
\begin{eqnarray}\nn
\Delta_{x_{\pi(t)}}&\le&  2^{-\frac{\frac{3}{(2p-1)^2}\log (t)(2p-1)}{2}}
=\left(\frac{1}{t}\right)^{\frac{3\log2}{2(2p-1)}}
\le\frac{1}{t},
\end{eqnarray}
where the last inequality is obtained by $p<1$ and $\frac{3\log 2}{2}<1$. 

The second term $R_2$ in the regret is upper bounded as follows. 
\begin{eqnarray}\nn
R_2 &=&\E\left[\sum_{t=t_0+1}^{T}\left(f(x_{\pi(t)}-f(x^*))\right) \right]\\\nn
&=&\E\left[
\sum_{t=t_0+1}^{T}\left(f(x_{\pi(t)}-f(x^*))\right)\I[\Delta_{x_{\pi(t)}}\le \frac{1}{t}]
\right]
+\E\left[
\sum_{t=t_0+1}^{T}\left(f(x_{\pi(t)}-f(x^*))\right)\I[\Delta_{x_{\pi(t)}}> \frac{1}{t}]
\right]\\\nn
&\le& \sum_{t=1}^{T} G\left(\frac{1}{t} + \left(\frac{1}{t}\right)^{\frac{3}{2}}\right)\\\label{R2UB}
&\le& g_{\max}(\log T+4).
\end{eqnarray}
In the above inequalities $\I$ is the indicator function. We used the convexity of $\E[f]$ and $\Delta_x\le1$ to arrive at~\eqref{R2UB}.

The upper bound on~\eqref{case1},~\eqref{case2} and~\eqref{case3} together with~\eqref{R2UB} results in Theorem~\ref{The:SubG}. 
\end{proof}

\section*{Appendix E}

\begin{proof}[Proof of Theorem~\ref{The:HT}]
The proof of this theorem follows the exact arguments as the proof of Theorem~\ref{The:SubG}, except for the upper bound on $\E[\tau]$. In particular plugging in the new upper bounds, the following maximization problem (in place of~\eqref{max1} and~\eqref{max1}) 
gives us an upper bound on $R_1$.
{{\begin{eqnarray}\label{max3}
&&\hspace{-2em}\max_{\{g(x_{(m)}),\tau_m\}_{m=1}^{3n_0}} \sum_{m=1}^{3n_0} \bigg(
\E[\tau_m]g(x_{(m)})
\bigg), ~\text{when $f$ is convex}\\\label{max4}
&&\hspace{-2em}\max_{\{g(x_{(m)}),\tau_m\}_{m=1}^{3n_0}} \sum_{m=1}^{3n_0} \bigg(
\E[\tau_m]\frac{g^2(x_{(m)})}{2\alpha}
\bigg), ~\text{when $f$ is strongly convex}\\\nn
&&\hspace{-2em}\textit{subject to:} \sum_{m=1}^{3n_0}\E[\tau_m]\le T,~~~\text{and}~~\\\nn
&&\hspace{-2em}\E[\tau(x)]  \leq  {\left(\Gamma \left(\frac{2b - 1}{b -1}\right) \left(\frac{{1}}{B_0}\left( \frac{u}{3} + \frac{1}{8} \right)\right)^{\frac{b}{b-1}} + 1\right) } \left(\left( \frac{{8}B_0^2}{|g(x)|^2} \log \left( \frac{18}{(b - 1)\sqrt{\bp}} \log \left( {\frac{{144}B_0^2 }{|g(x)|^2 (b - 1)\sqrt{\bp}  }} \right) \right) \right)^{\frac{b}{2(b-1)}}      + {8}\right)
\end{eqnarray}}}
Similar to the proof of Theorem~\ref{The:HT}, choosing $\E[\tau_m] = \frac{T}{3n_0}$ for all $m$ and characterizing the corresponding upper bounds on $g(x_m)$ gives us Theorem~\ref{The:HT}.

\end{proof}


\begin{thebibliography}{}


\bibitem{Agrawal11}

A. Agrawal, D.P. Foster, D. Hsu, S.M. Kakade, A. Rakhlin, ``Stochastic convex optimization with bandit feedback,'' {\em Advances in Neural Information Processing Systems, 24}, 2011. 




\bibitem{Agrawal12}
A. Agrawal, P. Bartlett, P. Ravikumar, and M. Wainwright, ``Information-theoretic  lower  bounds  on the oracle complexity of stochastic convex optimization,'' {\em IEEE Transactions on Information Theory,} 58(5):3235–3249, 2012.

\bibitem{Agrawal95}
R. Agrawal, ``The continuum-armed bandit problem,'' {\em SIAM journal on control and optimization}, no. 33, pp. 1926, 1995.

\bibitem{Bottou2018}
L. Bottou, F. E. Curtis, J. Nocedal, ``Optimization methods for large-scale machine learning,'' {\em Siam Review}, vol. 60, no. 2, pp. 223-311, 2018.

\bibitem{Bubeck11}
S. Bubeck, R. Munos, G. Stolz, C. Szepesvari, ``$\mathcal{X}$-armed Bandits,'' {\em Journal of Machine Learning Research 12}. pp. 1655-1695. 2011. 


\bibitem{Frazier16}
P. I. Frazier, S. G. Henderson, R. Waeber,
``Probabilistic bisection converges almost as quickly as stochastic
approximation'', available at {\em arXiv:1612.03964v1 [math.PR]}, 2016.

\bibitem{Hazan16}
E. Hazan, ``Introduction to Online Convex Optimization,'' {\em Foundations and Trends in Optimization}, vol. 2, no. 3-4, pp 157-325, 2016. 

\bibitem{Kiefer52}
J. Kiefer, J. Wolfowitz, ``Stochastic Estimation of the Maximum of a Regression Function,'' {\em Ann. Math. Statist.}, vol. 23, no. 3, pp. 462-466, 1952.

\bibitem{Kleinberg05}
R.  Kleinberg, ``Nearly  tight  bounds  for  the  continuum-armed  bandit  problem,'' {\em Advances  in  Neural Information Processing Systems, 18}, 2005.


\bibitem{Kleinberg08}
R. Kleinberg, A. Slivkins, and E. Upfal, ``Multi-armed bandits in metric spaces,'' {\em In
Proceedings  of  the 40th annual ACM symposium on Theory of computing,} pp. 681–690. ACM, 2008.


\bibitem{LaiSur}
T. Lai,
``Stochastic Approximation,'' {\em Annuals of Statistics,} no 31, pp. 391–406, 2003.

\bibitem{Lim2011}
E. Lim, ``On the Convergence Rate for Stochastic Approximation in the Nonsmooth Setting,'' {\em Mathematics of Operations Research}, vol. 36, no. 3, pp. 527-537, 2011. 

\bibitem{Nemirovski1983}
A.S. Nemirovski and D.B. Yudin, {\em Problem Complexity and Method Efficiency in Optimization}, Wiley, 1983.


\bibitem{Pasupathy11}
R. Pasupathy, S. Kim, ``The stochastic root-finding problem: overview, solutions, and open questions,'' {\em ACM Transactions on Modeling and Computational Simulations}, vol. 21, no. 3, pp. 19, 2011.

\bibitem{Rakhlin12}
A. Rakhlin,  O. Shamir,  and K. Sridharan, ``Making gradient descent optimal for strongly convex stochastic optimization,'' {\em In Proceedings of ICML}, 2012.

\bibitem{Robbins51}
H. Robbins and S. Monro ``A stochastic approximation method'' {\em The Annals of Mathematical Statistics}, vol. 22, no. 3, pp. 400–407, 1951.

\bibitem{Shwartz12}
S. Shalev-Shwartz, ``Online Learning and Online
Convex Optimization,'' {\em Foundations and Trends in Optimization}, vol. 4, no. 2, pp 107-194, 2012. 


\bibitem{Shamir13}
O. Shamir, T. Zhang, ``Stochastic gradient descent for non-smooth optimization:  Convergence results and optimal averaging schemes,'' {\em Proceedings of the 30th International Conference on Machine Learning}, 2013.



\bibitem{ISITPaper}
S. Vakili, Q. Zhao, ``A Random Walk Approach to First-Order Stochastic Convex Optimiza-
tion,'' {\em International Symposium on Information Theory (ISIT)}, 2019.







\bibitem{Bubeck13}
S. Bubeck, N. Cesa-Bianchi, G. Lugosi, “Bandits with Heavy Tail,” {\em IEEE Transaction on Information Theory}, vol. 59, pp. 7711-7717, 2013.



























 
















\end{thebibliography}
\end{document}